\newtheorem{theorem}{Theorem}
\title{Discovering Low-rank Subspaces for Language-agnostic \\ Multilingual Representations}
\author{Zhihui Xie$^1$ \quad Handong Zhao$^2$ \quad Tong Yu$^2$ \quad Shuai Li$^1$\thanks{~~Corresponding author.} \\
  $^1$Shanghai Jiao Tong University \quad $^2$Adobe Research \\
  \texttt{\{fffffarmer,shuaili8\}@sjtu.edu.cn} \\
  \texttt{\{hazhao,tyu\}@adobe.com}
}
\def\thickhline{%
  \noalign{\ifnum0=`}\fi\hrule \@height \thickarrayrulewidth \futurelet
   \reserved@a\@xthickhline}
\def\@xthickhline{\ifx\reserved@a\thickhline
               \vskip\doublerulesep
               \vskip-\thickarrayrulewidth
             \fi
      \ifnum0=`{\fi}}
\newlength{\thickarrayrulewidth}
\begin{document}
\maketitle

\newcommand{\zhihui}[1]{{\color{green}[#1]}}
\newcommand{\tong}[1]{{\color{blue}[tong: #1]}}

\newcommand{\original}{Original}
\newcommand{\demean}{Centered}
\newcommand{\lir}{LIR}
\newcommand{\ours}{LSAR}
\begin{abstract}
Large pretrained multilingual language models (ML-LMs) have shown remarkable capabilities of zero-shot cross-lingual transfer, without direct cross-lingual supervision.
While these results are promising, follow-up works found that, within the multilingual embedding spaces, there exists strong language identity information which hinders the expression of linguistic factors shared across languages.
For semantic tasks like cross-lingual sentence retrieval, it is desired to remove such language identity signals to fully leverage semantic information.
In this work, we provide a novel view of projecting away language-specific factors from a multilingual embedding space.
Specifically, we discover that there exists a low-rank subspace that primarily encodes information irrelevant to semantics (e.g., syntactic information).
To identify this subspace, we present a simple but effective \textit{unsupervised} method based on singular value decomposition with multiple monolingual corpora as input.
Once the subspace is found, we can directly project the original embeddings into the null space to boost language agnosticism without finetuning.
We systematically evaluate our method on various tasks including the challenging language-agnostic QA retrieval task.
Empirical results show that applying our method consistently leads to improvements over commonly used ML-LMs.



    
    
    
\end{abstract}
\section{Introduction}
Large language models pretrained with self-supervised objectives (e.g., masked language modeling) have become the \textit{de-facto} standard for various NLP tasks~\citep{peters-etal-2018-deep,devlin-etal-2019-bert,liu2019roberta}.
Follow-up extensions to the multilingual setting inherit similar training objectives and show very promising results~\citep{NEURIPS2019_c04c19c2,conneau-etal-2020-emerging,K2020Cross-Lingual}.
Despite these models are trained without explicit cross-lingual signals (i.e., translation pairs), they surprisingly exhibit impressive zero-shot cross-lingual transferability on natural language inference~\citep{conneau-etal-2018-xnli}, question answering~\citep{lewis-etal-2020-mlqa}, sentence retrieval~\citep{artetxe-schwenk-2019-massively}, etc.

While these ML-LMs offer practical solutions for cross-lingual tasks, there is an enduring debate about why the ML-LMs work.
From a positive perspective, \citet{pires-etal-2019-multilingual} conduct an exploratory study on mBERT~\citep{devlin-etal-2019-bert}, suggesting that cross-lingual transfer is possible even to languages in different scripts.
\citet{chi-etal-2020-finding} probe mBERT for structural phenomena and find that its representations can recover syntactic tree distances in languages other than English.
These findings present shreds of evidence that the pretrained multilingual representations do capture cross-lingual properties in various aspects.
On the flip side, a line of research shows that pretrained ML-LMs encode strong \textit{language-specific} signals.
This causes their multilingual representations to cluster by language identities instead of semantic meaning~\citep{wu-dredze-2019-beto,roy-etal-2020-lareqa,libovicky-etal-2020-language}.
The property largely hinders the expression of linguistic signals shared across languages.
For applications like cross-lingual sentence retrieval that mainly consider semantic information, ML-LMs with strong language-specific signals tend to retrieve answers from specific languages, regardless of their semantic meaning~\citep{roy-etal-2020-lareqa}.


Motivated by previous findings about language identity information, we aim to locate language-specific factors captured by the pretrained ML-LMs for recovering a \textit{language-agnostic} embedding space.
Inspired by advances in domain generalization~\citep{pmlr-v28-muandet13,NIPS2017_21c5bba1,pmlr-v119-piratla20a}, we explore a simple but effective approach, {\ours}, to discover a \textbf{L}ow-rank \textbf{S}ubspace for language-\textbf{A}gnostic \textbf{R}epresentations within an ML-LM.
The subspace primarily encodes information irrelevant to semantics, and can be identified \textit{without any translation pairs} based on singular value decomposition.
Once the subspace is found, we can directly factor out language-specific factors from the multilingual embeddings by projecting them into the null space without finetuning.

To evaluate {\ours}, we focus on semantic tasks for multilingual sentence embeddings.
On standard cross-lingual zero-shot transfer tasks including classification and sentence retrieval, {\ours} consistently achieves significant improvements.
Especially, applying {\ours} leads to significant improvements for pretrained ML-LMs on LAReQA~\citep{roy-etal-2020-lareqa}, a challenging benchmark targeting strong language agnosticism.

We further examine what information exactly the subspace contains.
By performing correlation analysis between structural language similarities obtained from the URIEL database~\citep{littell-etal-2017-uriel} and the language similarities captured on the subspace, we observe that the subspace encodes a great deal of syntactic information.
This implies that {\ours} successfully erases linguistic signals that are redundant to semantic tasks to facilitate language agnosticism.

To conclude, our main contributions are:
\begin{itemize}
    \item We present one of the pioneering efforts to discover that there exist low-rank subspaces of pretrained ML-LMs' embeddings that mainly encode language-specific signals.
    \item To identify the subspace in a ML-LM, we present a simple unsupervised approach called {\ours} based on singular value decomposition.
    By projecting embeddings onto the null space, {\ours} can exclude the unwanted factors to facilitate language agnosticism.
    \item Empirical results show that {\ours} is surprisingly effective for a variety of semantic tasks.
    We also elucidate that the subspace encodes strong syntactic signals with careful experimental analysis.
\end{itemize}




\section{Related Work}
\begin{figure*}[t]
    \centering
    \includegraphics[width=\linewidth]{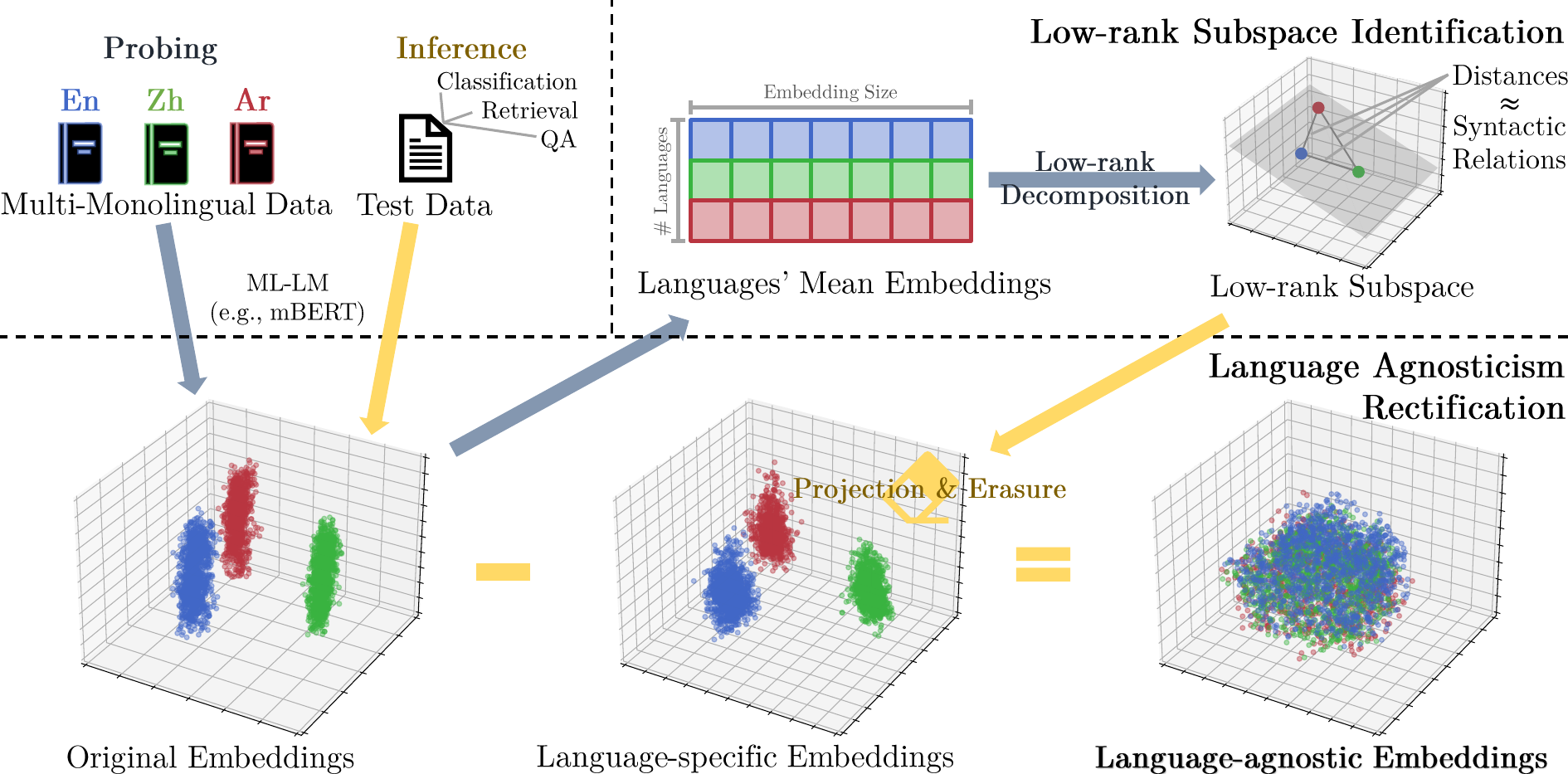}
    \caption{
    Conceptual illustration of our alignment method {\ours}.
    There exists strong language identity information from the original pretrained multilingual representations.
    By projecting away language-specific components that reside in a low-rank subspace discovered in identification process (in top-right), we can produce a language-agnostic embedding space via language agnosticism rectification (in bottom).
    The probing procedure (colored in blue-grey) and the inference procedure (colored in yellow) can be done separately.
    }
    \label{fig:illustration}
\end{figure*}


\paragraph{Understanding Pretrained Multilingual Representations}
Recently, there has been a surge of interest in probing pretrained ML-LMs like mBERT~\citep{devlin-etal-2019-bert}.
\citet{pires-etal-2019-multilingual} present an exploratory study on the cross-linguality of mBERT, showing that mBERT exhibits strong zero-shot performances for typologically similar languages.
\citet{libovicky-etal-2020-language} find that the original mBERT embeddings can be decomposed into a language-specific component and a language-neutral component.
\citet{chi-etal-2020-finding} probe mBERT for universal grammatical relations and show that mBERT does encode fine-grained syntactic distinctions across languages.
\citet{muller-etal-2021-first} find that mBERT operates as the stacking of two sub-networks and mainly the lower part of the model is crucial for cross-lingual transfer.

\paragraph{Language-agnostic Representations}
To further facilitate semantic downstream tasks like text classification, retrieval, and question answering, it is appealing to remove language-specific signals from the original embeddings without destroying the intrinsic semantic meaning.

LASER~\citep{artetxe-schwenk-2019-massively} utilizes parallel data to train a BiLSTM-based multilingual sentence encoder.
\citet{zhao-etal-2021-inducing} obtain language-agnostic embeddings from mBERT and XLM-R by explicitly aligning the word pairs and further normalizing the latent spaces with zero mean and unit variance.
\citet{yang-etal-2021-simple} regard the top principal components from each language's embedding space as the primary source of language bias and propose to project them away to boost language agnosticism.

Our work bears resemblance to~\citet{yang-etal-2021-simple}, but with clear distinctions in that: 1) we model language-specific signals \textit{jointly} in the multilingual embedding space instead of locating it \textit{separately} within each language; 2) we further verify what exactly the linguistic signals are identified, and present evidences that {\ours} primarily removes syntactic information.
A few previous works~\citep{gonen-etal-2020-greek,DBLP:journals/corr/abs-2109-08040,DBLP:journals/corr/abs-2205-10964} also attempt to locate language-agnostic embeddings in subspaces of ML-LMs.
Apart from the dissimilarity of methodology, we focus on sentence-level instead of token-level tasks and provide shreds of evidence that the identified subspace exhibits strong correlation with syntactic information.





\paragraph{Low-rank Subspaces in Other Applications}
Low-rank subspaces have been employed in various applications.
In face recognition, the most expressive features for face representations are located via subspace analysis methods like PCA~\citep{139758,1316855}.
For domain adaptation and domain generalization, a typical idea is to uncover a shared subspace on which the distribution mismatch between domains is reduced~\citep{pmlr-v28-muandet13,5640675,NIPS2017_21c5bba1}.
Recent advances in probing Generative Adversarial Networks (GANs) also observe meaningful latent subspaces that enable precise control of GAN generation~\citep{wang2021a,NEURIPS2021_8b406655}.
These findings to some extent motivate this paper.






\section{Methodology}
In this section, we first introduce our method to identify the low-rank language-specific subspace in an \textit{unsupervised} manner.
Once the subspace is found, we can then suppress the language identity from the original multilingual embeddings to achieve language agnosticism rectification by projecting them to the null space.
This post-training alignment procedure can largely benefit downstream tasks like cross-lingual retrieval which solely utilize semantic-related information.


\subsection{Multilingual Embedding Decomposition}\label{sec:preliminary}
To locate the language-specific factors,
we follow previous works~\citep{pires-etal-2019-multilingual,libovicky-etal-2020-language,yang-etal-2021-simple} to hypothesize that each multilingual embedding $\boldsymbol{e}_l \in \mathbb{R}^{d}$ in language $l$ can be decomposed in an additive form:
\begin{equation*}
    \boldsymbol{e}_l := \boldsymbol{s}_l + \boldsymbol{a}_l,
\end{equation*}
where $\boldsymbol{s}_l \in \mathbb{R}^{d}$ and $\boldsymbol{a}_l \in \mathbb{R}^{d}$ represent the language-specific component to remove and the language-agnostic component to keep, respectively.

Built on the above assumption, previous unsupervised approaches extract the language identity information \textit{separately} for each language space.
Given an ML-LM (e.g., mBERT), the extracted embeddings $\mathcal{E}_l := \{\boldsymbol{e}_l^i\}_{i=1}^{n}$ from $n$ samples of task training data or external monolingual corpora contain mixed linguistic information of semantic-relevant and semantic-irrelevant signals about language $l$.
\citet{libovicky-etal-2020-language} use the empirical mean $\frac{1}{n} \sum_{i=1}^n \boldsymbol{e}_l^i$ to obtain $\boldsymbol{s}_l$.
\citet{yang-etal-2021-simple} use the top-$k$ principal components $\boldsymbol{C}_l = \text{PCA} (\mathcal{E}_l) \in \mathbb{R}^{d \times k}$ to encode language identity signals, and propose to factor them out with $\boldsymbol{s}_l = \boldsymbol{C}_l \boldsymbol{C}_l^{\top} \boldsymbol{e}_l$ to facilitate language agnosticism.


In spite of their promising results for semantic-related tasks, these methods fall short of comprehensively discovering cross-lingual relationship in the latent space.
For each language $l$, both of them leverage solely $\mathcal{E}_l$ to locate language-specific information, which fails to distinguish itself from semantic signals as other languages' characteristics is unknown. 
Without careful tuning, this can lead to unexpected semantic information loss~\citep{khodak-etal-2018-la}.
Besides, it is also unclear what exactly language-specific signals are captured by these approaches.

\subsection{Low-rank Subspace Identification}
To alleviate the above issues, we attempt to globally capture language-specific information from the multilingual latent space.
Inspired by previous works in domain adaptation and domain generalization
\citep{pmlr-v28-muandet13,NIPS2017_21c5bba1,pmlr-v119-piratla20a}, we present a simple approach that identifies a low-rank subspace of the original multilingual latent space, $\boldsymbol{M}_s \in \mathbb{R}^{d \times r}$, spanned by $r$ components.
Intuitively, the subspace encodes language-specific signals via measuring the latent discrepancy among languages.

To be specific, we first extract the mean embedding $\boldsymbol{\mu}_l = \frac{1}{n} \sum_{i=1}^n \boldsymbol{e}_l^i$ of each language $l$ 
in the same spirit of previous approaches.
Concatenating $\boldsymbol{\mu}_l$ of $L$ languages column-by-column results in the mean embedding matrix $\boldsymbol{M} \in \mathbb{R}^{d \times L}$.
As discussed in Section~\ref{sec:preliminary}, the mean embeddings can unexpectedly mix the desired language-specific signals with semantic information.
To avoid removing the semantic information shared among languages, we decompose $\boldsymbol{M}$ into two components: 1) a vector $\boldsymbol{\mu}$ representing what is commonly shared across languages in the latent space; 2) a matrix $\boldsymbol{M}_s$ specifying a low-rank subspace on which different languages express different linguistic signals.
With the orthogonality constraint, our objective is:
\begin{equation}\label{eq:objective}
    \begin{aligned}
    \min_{\boldsymbol{\mu}, \boldsymbol{M}_{s}, \boldsymbol{\Gamma}}
    \quad& \left\|\boldsymbol{M}-\boldsymbol{\mu} \boldsymbol{\mathbbm{1}}^{\top}-\boldsymbol{M}_{s} \boldsymbol{\Gamma}^{\top}\right\|_{F}^{2}\\
    \textrm{s.t.} \quad& \boldsymbol{\mu} \perp \text{Span}\left(\boldsymbol{M}_{s}\right),
    \end{aligned}
\end{equation}
where $\boldsymbol{\Gamma} \in \mathbb{R}^{L \times r}$ is the coordinates of language-specific signals along the subspace's $r$ components and $\boldsymbol{\mathbbm{1}} \in \mathbb{R}^{d}$ contains all ones.

The optimal solution of Equation~\ref{eq:objective} can be computed efficiently via Singular Value Decomposition (SVD), as proved in Appendix~\ref{sec:proof}.
Algorithm~\ref{alg:ours} presents the detailed procedure.
The only hyperparameter $r < L$ controls the amount of language-specific information captured by the identified subspace.
The larger $r$ is, the more language-specific signals we can identify.

\subsection{Language Agnosticism Rectification}
Once we find the low-rank subspace with semantically irrelevant information encoded, we can improve language agnosticism via projecting multilingual embeddings onto the null space of $\boldsymbol{M}_s$:
\begin{equation*}
    \begin{aligned}
        \boldsymbol{a}_l
        &=\left(\boldsymbol{I}-\boldsymbol{M}_s\left(\boldsymbol{M}_s^{\top} \boldsymbol{M}_s\right)^{-1} \boldsymbol{M}_s^\top\right) \boldsymbol{e}_l \\
        &=\boldsymbol{e}_l-\boldsymbol{M}_s \boldsymbol{M}_s^\top \boldsymbol{e}_l.
    \end{aligned}
\end{equation*}
Given that usually $l \ll d$, the information removed is restricted to aspects that emerges to be language-specific and will not lead to dimensional collapse.






\SetKwComment{Comment}{/* }{ */}
\SetKwInput{KwData}{In}
\SetKwInput{KwResult}{Out}

\begin{algorithm}[t]
\caption{Language-specific Subspace Identification}\label{alg:ours}
\KwData{languages' mean embeddings $\boldsymbol{M}$, rank of subspace $r$}
\KwResult{language-agnostic component $\boldsymbol{\mu}$, language-specific subspace $\boldsymbol{M}_s$, coordinates $\boldsymbol{\Gamma}$}
\Comment{1) Approximate $\boldsymbol{M}$ in low rank}
$\boldsymbol{\mu}^\prime \gets \frac{1}{d} \boldsymbol{M} \boldsymbol{\mathbbm{1}}$\label{line:1}\;
$\boldsymbol{M}_{s}^\prime, \text{\_}, \boldsymbol{\Gamma}^\prime \gets \text{Top-} r \text{ SVD}\left(\boldsymbol{M}-\boldsymbol{\mu}^\prime \boldsymbol{\mathbbm{1}}^{\top}\right)$\;
$\boldsymbol{M}^\prime \gets \boldsymbol{\mu}^\prime \boldsymbol{\mathbbm{1}}^{\top}+\boldsymbol{M}_{s}^\prime {\boldsymbol{\Gamma}^\prime}^{\top}$\label{line:3}\;
\Comment{2) Force orthogonality}
$\boldsymbol{\mu} \gets \frac{1}{\|{\boldsymbol{M}^\prime}^{+} \boldsymbol{\mathbbm{1}}\|^2} {\boldsymbol{M}^\prime}^{+} \boldsymbol{\mathbbm{1}}$\label{line:4}\;
$\boldsymbol{M}_{s}, \text{\_}, \boldsymbol{\Gamma} \gets \text{Top-} r \text{ SVD}\left(\boldsymbol{M}^\prime-\boldsymbol{\mu} \boldsymbol{\mathbbm{1}}^{\top}\right)$\label{line:5}
\end{algorithm}



\section{Experiments}
\begin{table*}[t]
    \centering
    \begin{tabular}{ccccc}
        \thickhline
         & mBERT & XLM & XLM-R & LABSE\\
        \hline
        \multicolumn{5}{l}{\textit{Cross-lingual zero-shot transfer (w/o finetuning)}} \\
        \hline
        {\original} & 37.53{\scriptsize+00.00\%} & 28.13{\scriptsize+00.00\%} & 57.68{\scriptsize+00.00\%} & 95.47{\scriptsize+00.00\%}\\
        \demean~\citep{libovicky-etal-2020-language} & 39.57{\scriptsize+05.43\%}  & 27.13{\scriptsize-03.57\%} & 61.08{\scriptsize+05.89\%} & 95.56{\scriptsize+00.10\%}\\
        {\lir} ($k=1$) ~\citep{yang-etal-2021-simple} & 39.70{\scriptsize+05.77\%} & 28.75{\scriptsize+02.22\%} & 61.60{\scriptsize+06.80\%} & \textbf{95.63}{\scriptsize+00.16\%}\\
        {\lir} ($k=15$) ~\citep{yang-etal-2021-simple} & 41.21{\scriptsize+09.80\%} & 31.65{\scriptsize+12.51\%} & 62.80{\scriptsize+08.87\%} & 95.56{\scriptsize+00.10\%}\\
        {\ours} & \textbf{44.64}{\scriptsize+18.94\%} & \textbf{33.16}{\scriptsize+17.89\%} & \textbf{65.05}{\scriptsize+12.77\%} & 95.54{\scriptsize+00.08\%}\\
        \hline
        \multicolumn{5}{l}{\textit{Cross-lingual zero-shot transfer (w/ finetuning)}} \\
        \hline
        Full-Model-FS~\citep{xu-etal-2022-s4}$^{\dagger}$ & - & - & 60.5{\scriptsize+04.9\%}/66.2{\scriptsize+14.8\%} & - \\
        S$^4$-Tuning~\citep{xu-etal-2022-s4}$^{\dagger}$ & - & - & 66.1{\scriptsize+14.6\%}/69.5{\scriptsize+20.5\%} & - \\
        Full-Model~\citep{ruder-etal-2021-xtreme}$^{\ddagger}$ & 42.8{\scriptsize+14.0\%} & - & 76.6{\scriptsize+32.8\%} & - \\
        \thickhline
    \end{tabular}
    \caption{Retrieval accuracy (\%) on Tatoeba (averaged over all 36 languages).
    $^{\dagger}$Results from~\citet{xu-etal-2022-s4} report few-shot performances with different numbers of shots (64/128).
    $^{\ddagger}$Results are calculated from \citet{ruder-etal-2021-xtreme}.
    We use ``-'' to indicate results that are not reported in the references and use ``+\%'' to report relative improvements.
    }
    \label{tab:tatoeba}
\end{table*}


We systematically evaluate our method on various tasks followed by further analyses\footnote{Code: \url{https://github.com/fffffarmer/LSAR}.}, with the purposes of understanding: 1) whether the proposed approach can benefit downstream tasks; 2) what exactly the identified low-rank subspace captures.


To begin with, we describe our evaluation protocol for the alignment methods, which largely 
follows \citet{yang-etal-2021-simple} but with a broader scope to include more base models as listed in Section~\ref{sec:base_models}.
Given one of the pretrained ML-LMs, we first randomly collect 10,000 sentences for each language from the OSCAR corpus~\citep{ortiz-suarez-etal-2020-monolingual} covering all the evaluated languages and their web crawl texts\footnote{
\citet{yang-etal-2021-simple} use Wiki-40B~\citep{guo-etal-2020-wiki} for collecting sentence embeddings.
The corpus fails to cover all the languages evaluated in Tatoeba.
We also report the numbers using Wiki-40B as the text resource for LAReQA and Amazon Reviews in Appendix~\ref{sec:wiki40b}.}.
The sentence embeddings extracted by the pretrained model are then used for finding the low-rank subspace described in Equation~\ref{eq:objective}.
Unless otherwise indicated, we consistently report {\ours} with $r = l - 1$, where $l$ is the number of the evaluated languages.
We evaluate language agnosticism over pretrained ML-LMs that are commonly used, as described in Appendix~\ref{sec:base_models}.
Detailed results are listed in Appendix~\ref{sec:detail_results}.


\subsection{Baselines}\label{sec:baselines}
Apart from \textbf{\original} that keeps the pretrained ML-LM intact, 
we compare {\ours} with the following baselines.
The baselines share the same setting as ours in that
both of them require no parallel text and aim at removing language-specific factors in a post-training manner.

\paragraph{\demean} \citet{libovicky-etal-2020-language} extract language-neutral embeddings from the original pretrained multilingual sentence encoders via subtracting the mean embedding for each language.
The mean embeddings are calculated from the multi-monolingual OSCAR corpus.

\paragraph{\lir} \citet{yang-etal-2021-simple} propose to project away the top-$k$ principal components of each language's embeddings to facilitate language agnosticism, where $k$ is the hyperparameter.
Again, the top principal components are extracted from the multi-monolingual corpus.

\begin{figure*}[t]
     \centering
     \begin{subfigure}[b]{0.3\linewidth}
         \centering
         \includegraphics[width=\linewidth]{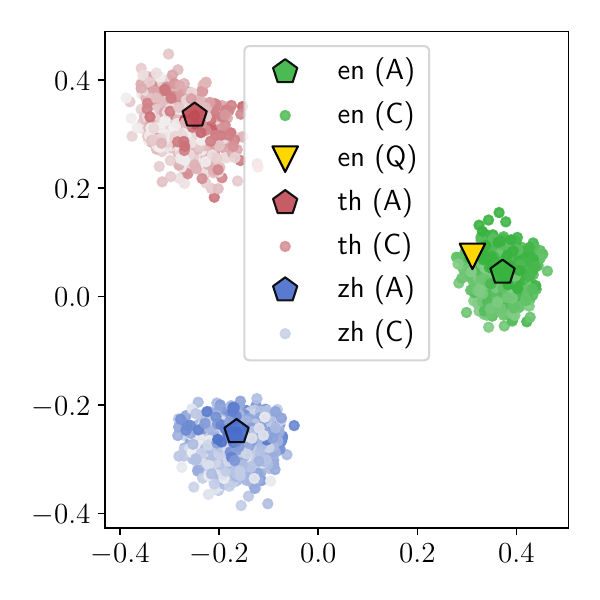}
         \caption{Original}
         \label{fig:lareqa-original}
     \end{subfigure}
     \begin{subfigure}[b]{0.3\linewidth}
         \centering
         \includegraphics[width=\linewidth]{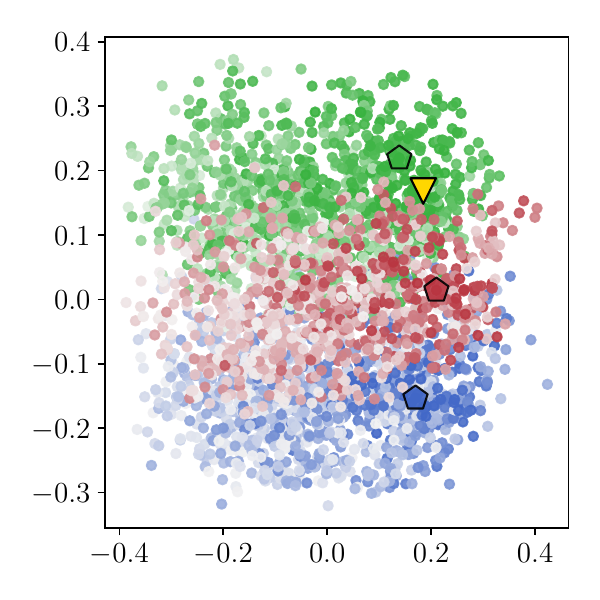}
         \caption{w/ {\lir} ($k=1$)}
         \label{fig:lareqa-lir}
     \end{subfigure}
     \begin{subfigure}[b]{0.3\linewidth}
         \centering
         \includegraphics[width=\linewidth]{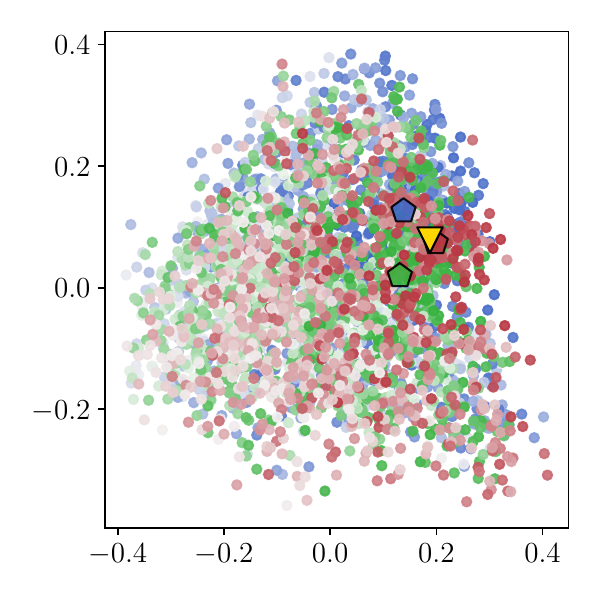}
         \caption{w/ {\ours}}
     \end{subfigure}
    \caption{2D PCA visualization on LAReQA.
    We display the embeddings collected from mBERT (X-X) on the XQuAD-R sub-dataset.
    Embeddings of the candidate answers (C) in English, Thai, and Mandarin are shown in small scatters.
    Embeddings of the question (Q) in English and the ground-truth answers (A) in English, Thai, and Mandarin are shown in large scatters.
    Higher opacity indicates higher predicted ranking (color bars: \includegraphics[height=8pt,width=20pt]{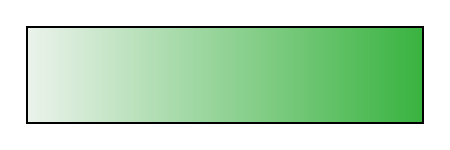}/\includegraphics[height=8pt,width=20pt]{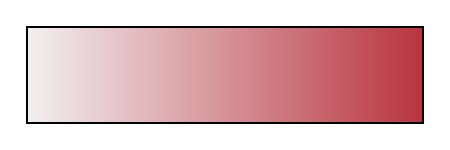}/\includegraphics[height=8pt,width=20pt]{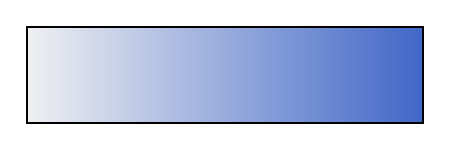}).}
    \label{fig:lareqa}
\end{figure*}

\subsection{Sentence Retrieval}\label{sec:tatoeba}
Tatoeba~\citep{artetxe-schwenk-2019-massively} is a commonly used dataset for evaluating ML-LMs.
It comprises up to 1,000 sentences for each language along with their English translations.
We follow the evaluation procedure of XTREME~\citep{pmlr-v119-hu20b} that covers 36 languages.
For each language pair, we go through each sentence in the source language and find the closest sentence in the target language using cosine similarity.

The top-1 retrieval accuracy results are shown in Table~\ref{tab:tatoeba}.
For mBERT~\citep{devlin-etal-2019-bert}, XLM~\citep{NEURIPS2019_c04c19c2}, and XLM-R~\citep{conneau-etal-2020-unsupervised}, applying {\ours} brings significant performance gains of up to 19\% relative improvement.
Compared with {\demean} and {\lir} which separately remove information for each language, {\ours} jointly utilizes the encoded information from all the languages to better locate language-specific factors.
Furthermore, we observe that {\ours} consistently achieves the best results with hyperparameter $r$ (the rank of the low-rank subspace) equal to the number of the evaluated languages, as shown in Appendix~\ref{sec:hyperparameter}.
As the languages are diversely distributed, it is reasonable that each language possesses its own linguistic characteristics, resulting in a larger language-specific subspace to factor out.
In contrast, we find that {\lir} is vulnerable to its hyperparameter $k$ (the number of the removed principal components), which is best shown in Figure~\ref{fig:ranks}.

For LABSE~\citep{feng-etal-2022-language}, all the methods fail to provide marked enhancement.
This can be mainly attributed to the fact that LABSE already uses parallel corpora to explicitly align multilingual embeddings.
Despite that the improvement is marginal, it is still promising to combine {\ours} with existing pretraining objectives to produce better language-agnostic embeddings.



We also include several representative baselines that finetune either mBERT or XLM-R for better cross-lingual transfer results.
Although these methods are not directly comparable to ours, we believe it provides additional valuable findings to include them.
Full-Model-FS and S$^4$-Tuning finetune XLM-R on full English labeled examples and then $K$-shot data over target languages ($K=64/128$).
For Full-Model, the pretrained models are finetuned on the English SQuAD data.
On mBERT, {\ours} outperforms Full-Model by a large margin.
We also observe on XLM-R that {\ours} is competitive with finetuning the full model on 128-shot data as well as finetuning a dedicated language sub-network (S$^4$-Tuning) on 64-shot data.
The results are quite promising given that we obtain better performances with the original encoders intact and no task-relevant training data.




\begin{table}[t]
    \centering
    \begin{tabular}{c|cc|cc}
        \thickhline
        & \multicolumn{2}{c|}{XQuAD-R} & \multicolumn{2}{c}{MLQA-R}\\
        & En-En & X-X & En-En & X-X \\
        \hline
        \original & 28.57 & 23.36 & 35.71 & 26.21\\
        \demean & 35.37 & 44.66 & 35.36 & 42.14\\
        {\lir} ($k=1$) & 37.70 & 44.25 & 38.03 & 41.96\\
        \ours & \textbf{41.13} & \textbf{45.89} & \textbf{40.55} & \textbf{43.32}\\
        \thickhline
    \end{tabular}
    \caption{Answer retrieval mAP (\%) on XQuAD-R and MLQA-R of LAReQA (averaged over all languages).}
    \label{tab:lareqa}
\end{table}

\subsection{Language-agonstic Answer Retrieval}
While Tatoeba reveals the cross-lingual transferability across English-centric language pairs, it is restricted to
monolingual pools (i.e., the set of candidates is restricted to certain language).
Therefore, it fails to thoroughly evaluate whether texts with a similar semantic meaning are grouped together in the latent space, regardless of their languages.

With that in mind, we further examine the alignment methods on LAReQA~\citep{roy-etal-2020-lareqa}, a challenging cross-lingual answer retrieval task.
Unlike Tatoeba, the targets of LAReQA must be retrieved from a large multilingual candidate pool.
It consists of two sub-datasets, XQuAD-R and MLQA-R, whose candidate pool covers 11 and 7 languages respectively.

We follow~\citet{yang-etal-2021-simple} to evaluate the alignment methods on two models, \textbf{mBERT (En-En)} and \textbf{mBERT (X-X)}.
Specifically, mBERT (En-En) finetunes the original mBERT model on the English QA pairs collected from the SQuAD v1.1 dataset.
mBERT (X-X) employs the same training procedure but with an extended dataset where each sample is translated into the 11 XQuAD languages.
Since all positive samples for finetuning are within the same language as the question query, both models exhibit strong self-language bias while preserving the weak alignment property.
For evaluation, we use the dot product of embeddings to score a QA pair, which accords with the finetuning protocol.
The retrieval performance is measured by mean Average Precision (mAP).


Table~\ref{tab:lareqa} reports our LAReQA results.
We can observe that applying {\ours} again results in signification improvements, nearly doubling mAP of mBERT (X-X) on XQuAD-R.
Since in the candidate pool each language has one of the relevant answers, better retrieval performances directly indicate better language agnosticism.
{\demean} and {\lir} ($k=1$) also show impressive performances, suggesting that in weakly aligned multilingual systems, the mean embeddings and principal components do encode language-specific signals.
But for {\lir}, it is shown that removing the first principal component consistently leads to the best performance.
This is opposite to what we observe on Tatoeba, where the optimal $k$ is around 15.

To further illustrate the degree of language agnosticism, we project an English question (\textit{What theory best explains gravity?}) as well as all candidates and the ground-truth answers in English, Thai, and Mandarin via PCA.
As plotted in Figure~\ref{fig:lareqa}, candidates in English are retrieved from mBERT (X-X) with higher priority than those in Thai and Mandarin.
Applying {\ours} can effectively eliminate strong language identity information residing in the original embedding space and draw closer the question and answers from different languages.
{\lir} with $k=1$, however, falls short of rectifying language-specific signals as illustrated by the embedding spectrum in Figure~\ref{fig:lareqa-lir}.

\begin{table}[t]
    \centering
    \begin{tabular}{c|ccc}
        \thickhline
         & mBERT & XLM & XLM-R \\
        \hline
        {\original} & 74.73 & 75.31 & 80.32 \\
        {\lir} ($k=1$) & 75.39 & 75.73 & 81.14 \\
        {\ours} ($r=1$) & \textbf{75.58} & 74.93 & 81.47 \\
        {\ours} ($r=2$) & 75.49 & \textbf{75.85} & \textbf{82.37} \\
        {\ours} & 75.24 & 75.27 & 81.25 \\
        \thickhline
    \end{tabular}
    \caption{
    Classification accuracy (\%) on Amazon Reviews (averaged over English, French, German and Japanese).
    We exclude {\demean} as the embeddings are already normalized and hence {\demean} produces the same results as {\original}.
    The results of LABSE are placed in Appendix~\ref{sec:appendix} due to limited space.
    }
    \label{tab:amazon}
\end{table}

\begin{table}[b]
    \centering
    \begin{tabular}{c|ccc}
        \thickhline
        & mBERT & XLM & XLM-R \\
        \hline
        {\original} & 0.2815 & 0.5422 & 0.2457 \\
        {\demean} & 0.0975 & 0.2483 & 0.2004 \\
        {\lir} ($k=1$) & 0.0900 & 0.1875 & 0.2203 \\
        {\ours} & \textbf{0.0801} & \textbf{0.1320} & \textbf{0.0856} \\
        \thickhline
    \end{tabular}
    \caption{
    Clustering performance (NMI) of embeddings obtained by mBERT on Tatoeba.
    The results of LABSE are placed in Appendix~\ref{sec:appendix} due to limited space.
    }
    \label{tab:cluster}
\end{table}

\begin{figure}[t]
    \centering

    \begin{subfigure}[b]{0.5\linewidth}
         \centering
         \includegraphics[width=\linewidth]{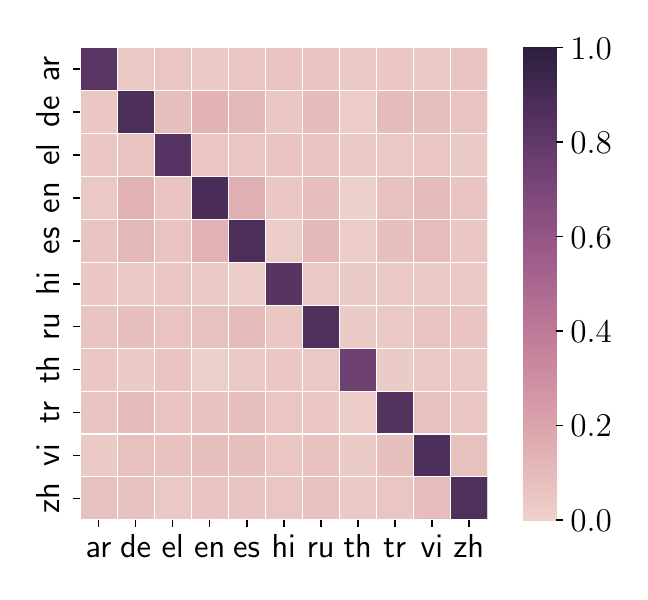}
         \caption{\original}
     \end{subfigure}
     \hspace{-10pt}
     \begin{subfigure}[b]{0.5\linewidth}
         \centering
         \includegraphics[width=\linewidth]{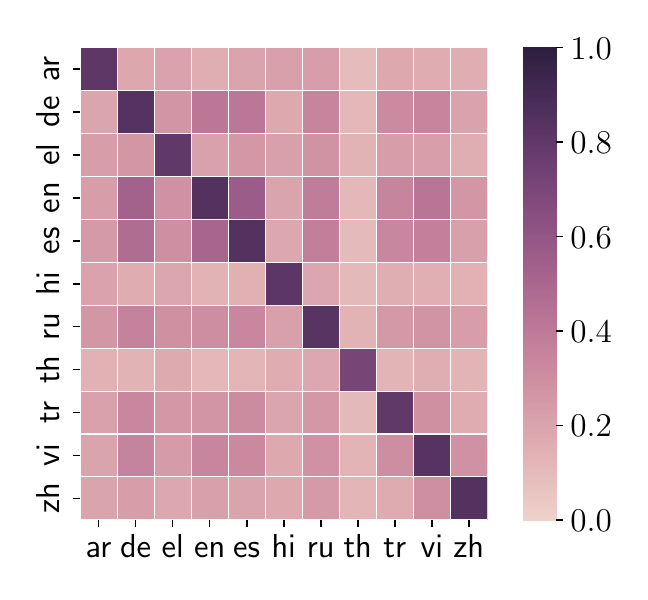}
         \caption{\ours}
     \end{subfigure}
    \caption{
    Answer retrieval mAP on XQuAD-R broken down by question language (row) and answer language (column), with model mBERT (X-X).
    Only one correct answer is included in the multilingual candidate pool.}
    \label{fig:limit_to_one}
\end{figure}

\subsection{Zero-shot Classification}

We also include the Amazon Reviews classification task~\citep{prettenhofer-stein-2010-cross} to assess zero-shot cross-lingual transfer.
The dataset consists of product reviews in English, French, German, and Japanese.
Each review is labeled as positive or negative, making it a binary classification task.
We use the same procedure to extract sentence embeddings as in Section~\ref{sec:tatoeba}, and normalize them to make regularization hyperparameters more consistent across languages.
Appendix~\ref{sec:hyperparameter} specifies how we select hyperparameters.
Following \citep{yang-etal-2021-simple}, the performance is evaluated via training a logistic regression classifier\footnote{\href{https://scikit-learn.org/stable/modules/generated/sklearn.linear_model.LogisticRegressionCV.html}{sklearn.linear\_model.LogisticRegressionCV()}.} on the English training data and then evaluating it on the test sets of all four languages.


From Table~\ref{tab:amazon}, we observe that the classifier trained on English data benefits from {\ours} for classifying reviews based on semantics as the language-specific factors are effectively erased.
Another interesting observation is that unlike sentence retrieval, removing more directions does not result in better performance.
This indicates that classification tasks can be more sensitive to semantic information.




\subsection{Analysis}

In this section, we present analysis on a variety of aspects towards what exactly language-specific information {\ours} captures.

\subsubsection{Language-specific Signals are Rectified}
From previous findings, we conjecture that our method achieves impressive cross-lingual performance by effectively removing language identity signals.
To quantitatively verify this, we measure the strength of language identity information from the perspective of clustering quality.
If the embeddings are clustered by language types, we can generally state that language-specific signals still play a prominent role in the multilingual latent space.

We perform K-Means clustering on sentence representations of Tatoeba with the number of clusters equal to the number of languages, and then evaluate the resulting clusters using the Normalized Mutual Information (NMI) metric~\citep{jawahar-etal-2019-bert}\footnote{\href{https://scikit-learn.org/stable/modules/generated/sklearn.metrics.normalized_mutual_info_score.html}{sklearn.metrics.normalized\_mutual\_info\_score()}.}.
As shown in Table~\ref{tab:cluster}, the original pretrained embeddings have relatively high NMI scores, suggesting the existence of strong language identity information. 
Our method consistently achieves smaller NMI scores.
This indicates that the embeddings have a lower tendency to group by language types since {\ours} successfully winnows down language-specific information.

The same conclusion can be drawn from the limit-to-one-target setting of LAReQA~\cite{roy-etal-2020-lareqa}.
Specifically, we remove 10 targets from the multilingual pool of XQuAD-R to evaluate on each target separately.
We choose the most biased X-X variant as the base model.
The heatmaps in Figure~\ref{fig:limit_to_one} show for each question language (row), the retrieval mAP on the pool containing just one target in different answer languages (column).
Since X-X has strong self-language bias, {\original} shows better performance on the diagonal than off-diagonal.
After applying {\ours}, we observe a significant increase in average off-diagonal performance (23.76\% vs. 5.89\%), without sacrificing much on-diagonal  performance (81.57\% vs. 84.57\%).
This again verifies that applying {\ours} effectively removes language-specific information.

\begin{figure}[t]
    \centering
    \begin{subfigure}[b]{\linewidth}
         \centering
         \includegraphics[width=\linewidth]{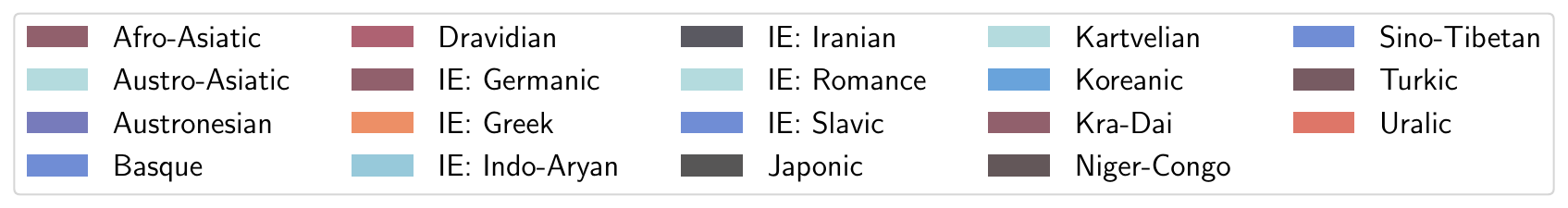}
     \end{subfigure}
     
    \begin{subfigure}[b]{0.5\linewidth}
         \centering
         \includegraphics[width=\linewidth]{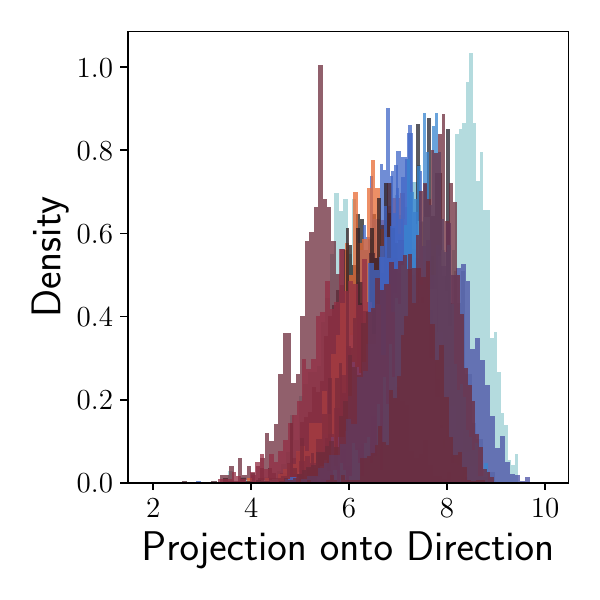}
         \caption{$1^{\text{st}}$ direction}
     \end{subfigure}
     \hspace{-10pt}
     \begin{subfigure}[b]{0.5\linewidth}
         \centering
         \includegraphics[width=\linewidth]{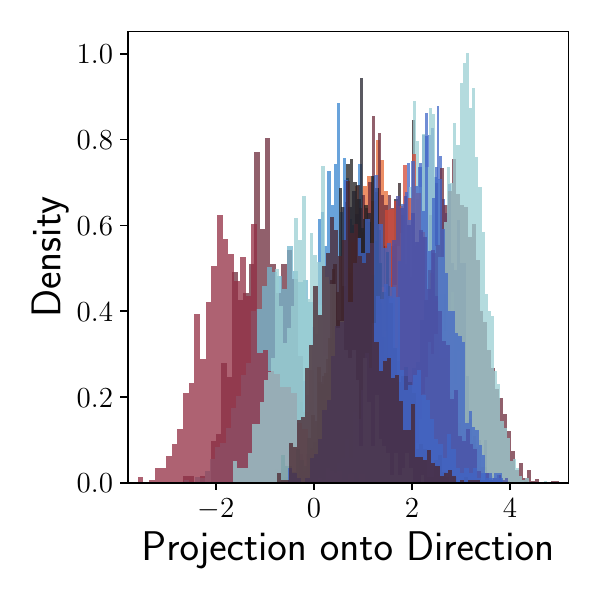}
         \caption{$2^{\text{nd}}$ direction}
     \end{subfigure}
    \caption{Removed components along the top two basis vectors of the identified low-rank subspace on mBERT.}
    \label{fig:direction}
\end{figure}

\begin{figure}[t]
    \centering
    \includegraphics[width=\linewidth]{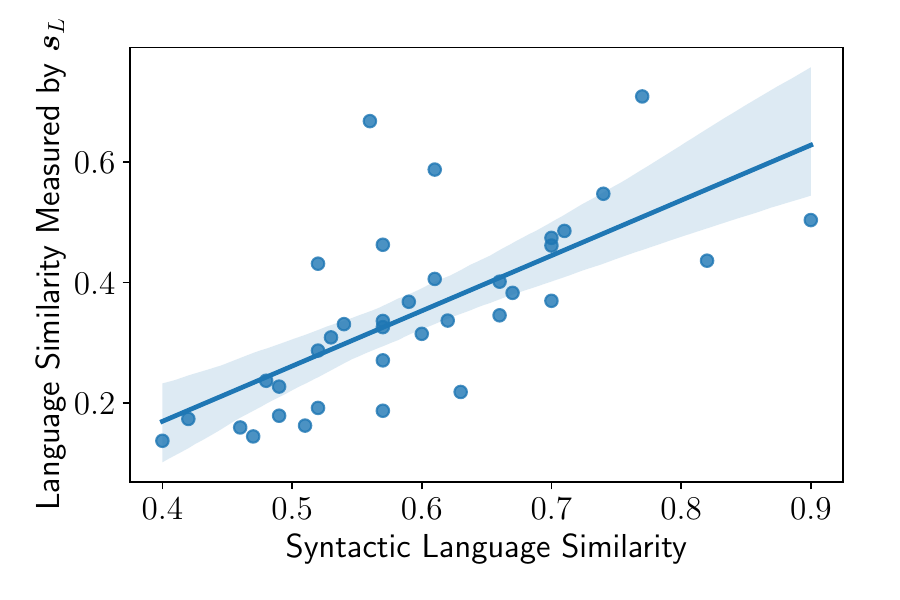}
    \caption{Language similarity obtained from syntactic signals vs. language similarity measured by language-specific $\boldsymbol{s}_L$ of mBERT.
    Each point is a language.}
\end{figure}

\subsubsection{Removed Components Form Groups of Language Families}
We next examine whether the removed components found by the low-rank subspace are truly language-specific.
This is demonstrated via plotting the removed components for different languages along top basis vectors of the subspace.
For the ease of visualization, we group them by language family.

Figure~\ref{fig:direction} shows the histograms of removed components along the top two basis vectors extracted from mBERT on 36 languages of Tatoeba, according to Equation~\ref{eq:objective}.
We can observe that the removed components disperse in groups of language families along these directions.
This implies that the identified subspace do capture language-specific signals and hence removing them along the basis vectors can narrow down latent discrepancy.

\subsubsection{The Identified Subspace Primarily Encodes Syntactic Information}
Finally, given that the removed components are language-specific, we investigate to what extent the low-rank subspace encodes typological relations among languages.
Specifically, we use the URIEL database~\citep{littell-etal-2017-uriel} to collect distances between English and other languages set out by experts based on certain typological information (e.g., syntax and phonology).
We then compare the typological distances with languages similarities obtained from the removed language-specific embeddings $\boldsymbol{s}_L$ as well as the resulting language-agnostic embeddings
$\boldsymbol{a}_L$ by calculating the cosine similarity between languages' mean embeddings.

Among all types of typological signals listed in URIEL, we find that the removed language-specific factors are mostly correlated with syntactic information.
Table~\ref{tab:correlation} shows the Pearson correlations on English and other 36 languages from Tatoeba.
The removed language-specific component $\boldsymbol{s}_L$ is highly correlated with syntactic information, whereas the correlation is much smaller in the language-agnostic embedding space with $\boldsymbol{s}_L$ removed.
This finding is in line with previous works~\citep{chi-etal-2020-finding,zhao-etal-2021-inducing} that observe the pretrained multilingual models encode rich syntactic information.

We find no prominent correlation between the removed components along certain basis vectors of the subspace and typological information.
As we do not presuppose any correspondence between basis vectors and linguistic signals, a specific basis vector falls short of individually encoding language-specific information.

\begin{table}[t]
    \centering
    \begin{tabular}{c|cccc}
        \thickhline
        & mBERT & XLM & XLM-R & LABSE \\
        \hline
        $\boldsymbol{s}_L$ & 0.6910 & 0.6378 & 0.7526 & 0.6894 \\
        $\boldsymbol{a}_L$ & -0.2711 & 0.2239 & 0.1338 & -0.2362 \\
        \thickhline
    \end{tabular}
    \caption{Pearson correlations between syntactic language similarities obtained from the URIEL database, and the language similarities obtained from language-specific $\boldsymbol{s}_L$ as well as language-agnostic $\boldsymbol{a}_L$.}
    \label{tab:correlation}
\end{table}




\section{Conclusion}
We present a simple yet effective approach called {\ours} to boost language agnosticism for pretrained multilingual encoders.
{\ours} identifies a low-rank subspace residing in a pretrained model that primarily encodes language-specific signals in an unsupervised manner via singular value decomposition.
Once the subspace is discovered, it can be used to efficiently project away the language identity information.
Empirical results demonstrate the great effectiveness of {\ours} on semantic tasks and shed light on its ability to locate syntactic relations between languages.

\section*{Limitations}
Our method {\ours} is designed and evaluated for semantic tasks.
For future work, we are interested in continuing our study for locating more fine-grained linguistic information, which can potentially boost a larger variety of downstream tasks.
While the simplicity of the proposed {\ours} is appealing, it also opens up directions for future work by generalizing the first-moment mean embeddings to higher-moment statistics and combining with pretraining objectives in more sophisticated ways.


\bibliography{emnlp2022}
\bibliographystyle{acl_natbib}

\newpage

\appendix
\section{Theoretical Justification}\label{sec:proof}
In this section, we present Theorem~\ref{theorem:objective} and the corresponding proof.
We follow the same proving procedure in \citet{pmlr-v119-piratla20a}.

\begin{theorem}~\label{theorem:objective}
    For any matrix $\boldsymbol{M} \in \mathbb{R}^{d \times L}$, Algorithm~\ref{alg:ours} returns $\boldsymbol{\mu} \in \mathbb{R}^{d}, \boldsymbol{M}_{s} \in \mathbb{R}^{d \times r}, \boldsymbol{\Gamma} \in \mathbb{R}^{L \times r}$ that minimize Equation~\ref{eq:objective} where $\boldsymbol{\mu} \perp \text{Span}\left(\boldsymbol{M}_{s}\right)$.
\end{theorem}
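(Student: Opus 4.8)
The plan is to prove the theorem in the same three-stage spirit as \citet{pmlr-v119-piratla20a}: (i) show that the orthogonality constraint does not shrink the feasible set of reconstructions, (ii) solve the resulting unconstrained structured low-rank problem exactly, and (iii) verify that the two passes of Algorithm~\ref{alg:ours} realize this solution. Throughout I would write $\boldsymbol{H}=\boldsymbol{I}-\frac{1}{L}\boldsymbol{\mathbbm{1}}\boldsymbol{\mathbbm{1}}^\top$ for the column-centering projector on $\mathbb{R}^{L}$ and let $P=\boldsymbol{\mu}\boldsymbol{\mathbbm{1}}^\top+\boldsymbol{M}_s\boldsymbol{\Gamma}^\top$ denote the reconstruction, so that the objective is $\|\boldsymbol{M}-P\|_F^2$.

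First I would argue that minimizing over the orthogonality-constrained parametrization is equivalent to minimizing $\|\boldsymbol{M}-P\|_F^2$ over the feasible set $\mathcal{S}=\{\boldsymbol{c}\boldsymbol{\mathbbm{1}}^\top+\boldsymbol{Z}:\boldsymbol{c}\in\mathbb{R}^{d},\ \mathrm{rank}(\boldsymbol{Z})\le r\}$. Given any element of $\mathcal{S}$, factor $\boldsymbol{Z}=\boldsymbol{M}_s\boldsymbol{\Gamma}^\top$ and split $\boldsymbol{c}=\boldsymbol{c}_\perp+\boldsymbol{c}_\parallel$ with $\boldsymbol{c}_\parallel\in\mathrm{Span}(\boldsymbol{M}_s)$ and $\boldsymbol{c}_\perp\perp\mathrm{Span}(\boldsymbol{M}_s)$. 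Since $\boldsymbol{c}_\parallel\boldsymbol{\mathbbm{1}}^\top$ lies in $\mathrm{col}(\boldsymbol{M}_s)$ it can be absorbed into the low-rank term, and setting $\boldsymbol{\mu}=\boldsymbol{c}_\perp$ reproduces the same $P$ with $\boldsymbol{\mu}\perp\mathrm{Span}(\boldsymbol{M}_s)$. Hence the constraint is pure gauge-fixing and the constrained and unconstrained optima coincide.

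The heart of the proof is solving $\min_{\mathcal{S}}\|\boldsymbol{M}-P\|_F^2$. Here I would split the Frobenius norm through the orthogonal right-projectors $\frac{1}{L}\boldsymbol{\mathbbm{1}}\boldsymbol{\mathbbm{1}}^\top$ and $\boldsymbol{H}$. Because $\boldsymbol{c}\boldsymbol{\mathbbm{1}}^\top\boldsymbol{H}=0$, the objective decomposes as $\|(\bar{\boldsymbol{m}}-\boldsymbol{c}-\bar{\boldsymbol{z}})\boldsymbol{\mathbbm{1}}^\top\|_F^2+\|\boldsymbol{M}\boldsymbol{H}-\boldsymbol{Z}\boldsymbol{H}\|_F^2$, where $\bar{\boldsymbol{m}},\bar{\boldsymbol{z}}$ are the column means of $\boldsymbol{M},\boldsymbol{Z}$. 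The first term is annihilated by the choice $\boldsymbol{c}=\bar{\boldsymbol{m}}-\bar{\boldsymbol{z}}$ for any $\boldsymbol{Z}$, so the problem collapses to the plain best rank-$r$ approximation of the centered matrix $\boldsymbol{M}\boldsymbol{H}$, solved by Eckart--Young via its top-$r$ SVD. Since $\boldsymbol{M}\boldsymbol{H}\boldsymbol{\mathbbm{1}}=0$, the truncated SVD also has $\bar{\boldsymbol{z}}=0$, so the optimal mean is $\boldsymbol{c}^\star=\bar{\boldsymbol{m}}$ and $P^\star=\bar{\boldsymbol{m}}\boldsymbol{\mathbbm{1}}^\top+(\boldsymbol{M}\boldsymbol{H})_r$. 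This is exactly the matrix $\boldsymbol{M}'$ built in lines~\ref{line:1}--\ref{line:3}, because $\boldsymbol{M}-\boldsymbol{\mu}'\boldsymbol{\mathbbm{1}}^\top=\boldsymbol{M}\boldsymbol{H}$ for the column mean $\boldsymbol{\mu}'$; thus Stage~1 already attains the global minimum value.

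Finally I would verify that Stage~2 merely re-coordinatizes $\boldsymbol{M}'$ into the orthogonal gauge without altering it. The key identity is that the vector in line~\ref{line:4} solves $\boldsymbol{M}'^\top\boldsymbol{\mu}=\|\boldsymbol{\mu}\|^2\boldsymbol{\mathbbm{1}}$ with $\boldsymbol{\mu}\in\mathrm{col}(\boldsymbol{M}')$, the pseudo-inverse giving the minimum-norm solution and the prefactor fixing the scale. This forces $\boldsymbol{\mu}^\top(\boldsymbol{M}'-\boldsymbol{\mu}\boldsymbol{\mathbbm{1}}^\top)=0$, so $\mathrm{col}(\boldsymbol{M}'-\boldsymbol{\mu}\boldsymbol{\mathbbm{1}}^\top)\subseteq\mathrm{col}(\boldsymbol{M}')\cap\boldsymbol{\mu}^\perp$, which is $r$-dimensional; consequently $\boldsymbol{M}'-\boldsymbol{\mu}\boldsymbol{\mathbbm{1}}^\top$ has rank at most $r$, the top-$r$ SVD in line~\ref{line:5} reconstructs it exactly, and the returned $\boldsymbol{M}_s$ satisfies $\mathrm{Span}(\boldsymbol{M}_s)\perp\boldsymbol{\mu}$ with $\boldsymbol{\mu}\boldsymbol{\mathbbm{1}}^\top+\boldsymbol{M}_s\boldsymbol{\Gamma}^\top=\boldsymbol{M}'=P^\star$. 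The main obstacle I anticipate is precisely this Stage~2 bookkeeping: I must confirm that $\boldsymbol{\mathbbm{1}}$ lies in the row space of $\boldsymbol{M}'$ (which holds thanks to the $\bar{\boldsymbol{m}}\boldsymbol{\mathbbm{1}}^\top$ summand when $\boldsymbol{M}'$ has full rank $r+1$), so that the linear system is exactly solvable and the rank genuinely drops to $r$. Degenerate cases where $\boldsymbol{M}'$ is rank-deficient or $\boldsymbol{\mathbbm{1}}\notin\mathrm{row}(\boldsymbol{M}')$ would require a separate but straightforward argument.
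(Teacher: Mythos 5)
Your proposal is correct, and it is not the paper's argument: the two differ in how Stage-1 optimality is established. The paper asserts (without proof) that Equation~\ref{eq:objective} is equivalent to minimizing $\|\boldsymbol{M}-\widehat{\boldsymbol{M}}\|_F^2$ over $\{\widehat{\boldsymbol{M}}:\operatorname{rank}(\widehat{\boldsymbol{M}})\le r+1,\ \boldsymbol{\mathbbm{1}}\in\operatorname{Span}(\widehat{\boldsymbol{M}}^{\top})\}$, then re-runs the Eckart--Young--Mirsky argument on this constrained set, showing $\sigma_i(\boldsymbol{M}-\widehat{\boldsymbol{M}})\ge\sigma_{i+r}(\boldsymbol{M}-\boldsymbol{\mu}^\prime\boldsymbol{\mathbbm{1}}^{\top})$ for every feasible $\widehat{\boldsymbol{M}}$, and finishes with the same pseudo-inverse gauge fixing you use. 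You instead make the reformulation exact---the orthogonality constraint is pure gauge, so the reconstruction set is $\{\boldsymbol{c}\boldsymbol{\mathbbm{1}}^{\top}+\boldsymbol{Z}:\operatorname{rank}(\boldsymbol{Z})\le r\}$---and then split the objective by the complementary right projectors $\tfrac{1}{L}\boldsymbol{\mathbbm{1}}\boldsymbol{\mathbbm{1}}^{\top}$ and $\boldsymbol{H}=\boldsymbol{I}-\tfrac{1}{L}\boldsymbol{\mathbbm{1}}\boldsymbol{\mathbbm{1}}^{\top}$, reducing everything to one black-box use of Eckart--Young on the centered matrix $\boldsymbol{M}\boldsymbol{H}$. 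Your route is more elementary and also tighter in two places. First, the paper's reformulated feasible set is in fact a \emph{strict subset} of the reconstruction set of Equation~\ref{eq:objective} (it excludes solutions with $\boldsymbol{\mu}=\boldsymbol{0}$ whose row space misses $\boldsymbol{\mathbbm{1}}$), so optimality over that set does not by itself yield optimality for the original problem; your gauge lemma is exactly what closes this gap. Second, your Stage-2 bookkeeping is more complete: the paper derives the pseudo-inverse formula for $\boldsymbol{\mu}$ under the assumption that an orthogonal re-decomposition of $\boldsymbol{M}^\prime$ exists, whereas you prove existence via the rank-drop argument $\operatorname{col}(\boldsymbol{M}^\prime-\boldsymbol{\mu}\boldsymbol{\mathbbm{1}}^{\top})\subseteq\operatorname{col}(\boldsymbol{M}^\prime)\cap\boldsymbol{\mu}^{\perp}$, so the top-$r$ SVD in Line~\ref{line:5} reconstructs $\boldsymbol{M}^\prime$ exactly and returns $\boldsymbol{M}_s$ orthogonal to $\boldsymbol{\mu}$. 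What the paper buys in exchange is self-containedness---it never invokes Eckart--Young as a lemma, mirroring the style of \citet{pmlr-v119-piratla20a}. The degenerate cases you flag ($\bar{\boldsymbol{m}}=\boldsymbol{0}$, where Line~\ref{line:4} divides by zero, or $\operatorname{rank}(\boldsymbol{M}^\prime)\le r$) are silently ignored by the paper as well, so acknowledging them leaves your proof no weaker than the published one.
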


\begin{proof}
    Algorithm~\ref{alg:ours} first obtains the best approximation of $\boldsymbol{M}$ with rank $r + 1$ and $\boldsymbol{\mathbbm{1}}$ in its row space (Line~\ref{line:1}-\ref{line:3}).
    The orthogonal constraint $\boldsymbol{\mu} \perp \text{Span}\left(\boldsymbol{M}_{s}\right)$ is then forced without obeying the low-rank property (Line~\ref{line:4}-\ref{line:5}).
    
    To begin with, note that the optimization problem in Equation~\ref{eq:objective} is equivalent to the following:
    
    \begin{equation}\label{eq:equal}
        \begin{aligned}
        \min_{\widehat{\boldsymbol{M}}} \quad& \left\|\boldsymbol{M}-\widehat{\boldsymbol{M}}\right\|_{F}^{2}\\
        \textrm{s.t.} \quad& \text{rank}\left(\widehat{\boldsymbol{M}}\right) \leq r + 1 \text{ and} \\
        \quad& \boldsymbol{\mathbbm{1}} \in \text{Span}\left(\widehat{\boldsymbol{M}}^\top\right).
        \end{aligned}
    \end{equation}
    
    Let $\boldsymbol{U}, \boldsymbol{\Sigma}, \boldsymbol{V} = \text{SVD} \left(\boldsymbol{M}-\boldsymbol{\mu}^\prime \boldsymbol{\mathbbm{1}}^{\top}\right)$.
    We have that $\boldsymbol{\mathbbm{1}} \perp \text{Span}\left(\boldsymbol{V}^\top\right)$ given $\left(\boldsymbol{M}-\boldsymbol{\mu}^\prime \boldsymbol{\mathbbm{1}}^{\top}\right)\boldsymbol{\mathbbm{1}} = \boldsymbol{0}$.
    Denote by $\boldsymbol{U}_r \boldsymbol{\Sigma}_r \boldsymbol{V}_r^\top$ the top-$r$ component of $\boldsymbol{U} \boldsymbol{\Sigma} \boldsymbol{V}^\top$, by $\sigma_{i}\left(\boldsymbol{A}\right)$ the $i$-th largest singular value of $\boldsymbol{A}$ and by $\boldsymbol{A}_i$ the best rank-$i$ approximation of $\boldsymbol{A}$.
    
    The first step is to show that $\boldsymbol{\mu}^\prime \boldsymbol{\mathbbm{1}}^{\top} + \boldsymbol{U}_r \boldsymbol{\Sigma}_r \boldsymbol{V}_r^\top$ minimizes the objective in Equation~\ref{eq:equal}.
    Following the proof of Eckart-Young-Mirsky theorem for low-rank approximation~\citep{Schmidt1907,eckart1936approximation}, let $\widetilde{\boldsymbol{M}} := \boldsymbol{M}-\widehat{\boldsymbol{M}}$ with any feasible $\widehat{\boldsymbol{M}}$ fixed.
    We have
    \begin{equation*}
        \begin{aligned}
            \sigma_{i}\left(\widetilde{\boldsymbol{M}}\right) 
            =& \left\|\widetilde{\boldsymbol{M}}-\widetilde{\boldsymbol{M}}_{i - 1}\right\|_F \\
            =& \left\|\widetilde{\boldsymbol{M}}-\widetilde{\boldsymbol{M}}_{i - 1}\right\|_F + \left\|\widehat{\boldsymbol{M}}-\widehat{\boldsymbol{M}}\right\|_F \\
            \geq& \left\|\widetilde{\boldsymbol{M}} + \widehat{\boldsymbol{M}} -\widetilde{\boldsymbol{M}}_{i - 1} - \widehat{\boldsymbol{M}}\right\|_F \\
            =& \left\|\boldsymbol{M} -\widetilde{\boldsymbol{M}}_{i - 1} - \widehat{\boldsymbol{M}}\right\|_F \\
            \geq& \min_{\bar{\boldsymbol{M}}} \left\|\boldsymbol{M} -\bar{\boldsymbol{M}}\right\|_F,
        \end{aligned}
    \end{equation*}
    where the minimum is taken over all $\bar{\boldsymbol{M}}$ with $\text{rank}\left(\bar{\boldsymbol{M}}\right) = i + r$ and $\boldsymbol{\mathbbm{1}} \in \text{Span}\left(\bar{\boldsymbol{M}}^\top\right)$.
    By taking $\bar{\boldsymbol{M}} = \boldsymbol{\mu}^\prime \boldsymbol{\mathbbm{1}}^{\top} + \boldsymbol{U}_{i + r - 1} \boldsymbol{\Sigma}_{i + r - 1} \boldsymbol{V}_{i + r - 1}^\top$, we have $\sigma_{i}\left(\widetilde{\boldsymbol{M}}\right) \geq \sigma_{i + r}\left(\boldsymbol{U} \boldsymbol{\Sigma} \boldsymbol{V}^\top\right)$ and therefore $\left\|\boldsymbol{M}-\widehat{\boldsymbol{M}}\right\|_{F}^{2} \geq \left\|\boldsymbol{M}-\boldsymbol{\mu}^\prime \boldsymbol{\mathbbm{1}}^{\top} - \boldsymbol{U}_r \boldsymbol{\Sigma}_r \boldsymbol{V}_r^\top\right\|_{F}^{2}$.
    
    Next, we find $\boldsymbol{\mu}$ and $\boldsymbol{M}_{s}$ that meet the orthogonality constraint while preserving the low-rank structure.
    Suppose $\boldsymbol{\mu} \boldsymbol{\mathbbm{1}}^{\top}+\boldsymbol{M}_{s}{\boldsymbol{\Gamma}}^{\top} = \boldsymbol{\mu}^\prime \boldsymbol{\mathbbm{1}}^{\top}+\boldsymbol{M}_{s}^\prime{\boldsymbol{\Gamma}^\prime}^{\top}$ with $\boldsymbol{\mu} \perp \text{Span}\left(\boldsymbol{M}_{s}\right)$, we have that $\boldsymbol{\mu}^\top \left(\boldsymbol{\mu} \boldsymbol{\mathbbm{1}}^{\top}+\boldsymbol{M}_{s}{\boldsymbol{\Gamma}}^{\top}\right) = \left\|\boldsymbol{\mu}\right\|^2 \boldsymbol{\mathbbm{1}}^\top$ which yields $\boldsymbol{\mu}^\top = \left\|\boldsymbol{\mu}\right\|^2 \left(\boldsymbol{\mu}^\prime \boldsymbol{\mathbbm{1}}^{\top}+\boldsymbol{M}_{s}^\prime{\boldsymbol{\Gamma}^\prime}^{\top}\right)^+ \boldsymbol{\mathbbm{1}}^\top$.
\end{proof}

\section{Base Models}\label{sec:base_models}
We evaluate the alignment methods based on a number of established pretrained multilingual models.
We mainly build on the Transformers library~\citep{wolf-etal-2020-transformers} for our experiments.

\paragraph{mBERT\footnote{\url{https://huggingface.co/bert-base-multilingual-cased}.}}
Multilingual BERT~\citep{devlin-etal-2019-bert} is a transformer model~\citep{vaswani2017attention} pretrained on Wikipedia, with the objective of Masked Language Modeling (MLM) and a shared vocabulary across all languages.

\paragraph{XLM\footnote{\url{https://huggingface.co/xlm-mlm-100-1280}.}} XLM~\citep{NEURIPS2019_c04c19c2} also uses the MLM objective and the monolingual Wikipedia corpus for pretraining, with a larger model and a larger vocabulary.

\paragraph{XLM-R\footnote{\url{https://huggingface.co/xlm-roberta-large}.}} XLM-R
\citep{conneau-etal-2020-unsupervised} follows a similar training procedure as XLM but collects the larger-scale CommonCrawl corpus.

\paragraph{LABSE\footnote{\url{https://huggingface.co/sentence-transformers/LaBSE}.}} LABSE
\citep{feng-etal-2022-language} is the state-of-the-art multilingual sentence encoder that leverages bilingual sentence pairs for pretraining.


Following previous works~\citep{jawahar-etal-2019-bert,ruder-etal-2021-xtreme} that observe certain intermediate layers of Transformer consistently outperform the last layer for cross-lingual tasks, we use the 8th layer for mBERT and XLM, and the 11th layer for XLM-R.
We apply mean-pooling to obtain sentence embeddings as it is widely used~\citep{conneau-etal-2020-emerging, muller-etal-2021-first}.
For LABSE as well as mBERT (X-X) and mBERT (En-En) used in LAReQA, we evaluate the alignment methods on the original sentence embeddings.

\section{Supplementary Results}
In this section, we provide supplementary experimental results.

\label{sec:appendix}

\begin{figure*}[t]
    \centering
    \begin{subfigure}[b]{0.32\linewidth}
         \centering
         \includegraphics[width=\linewidth]{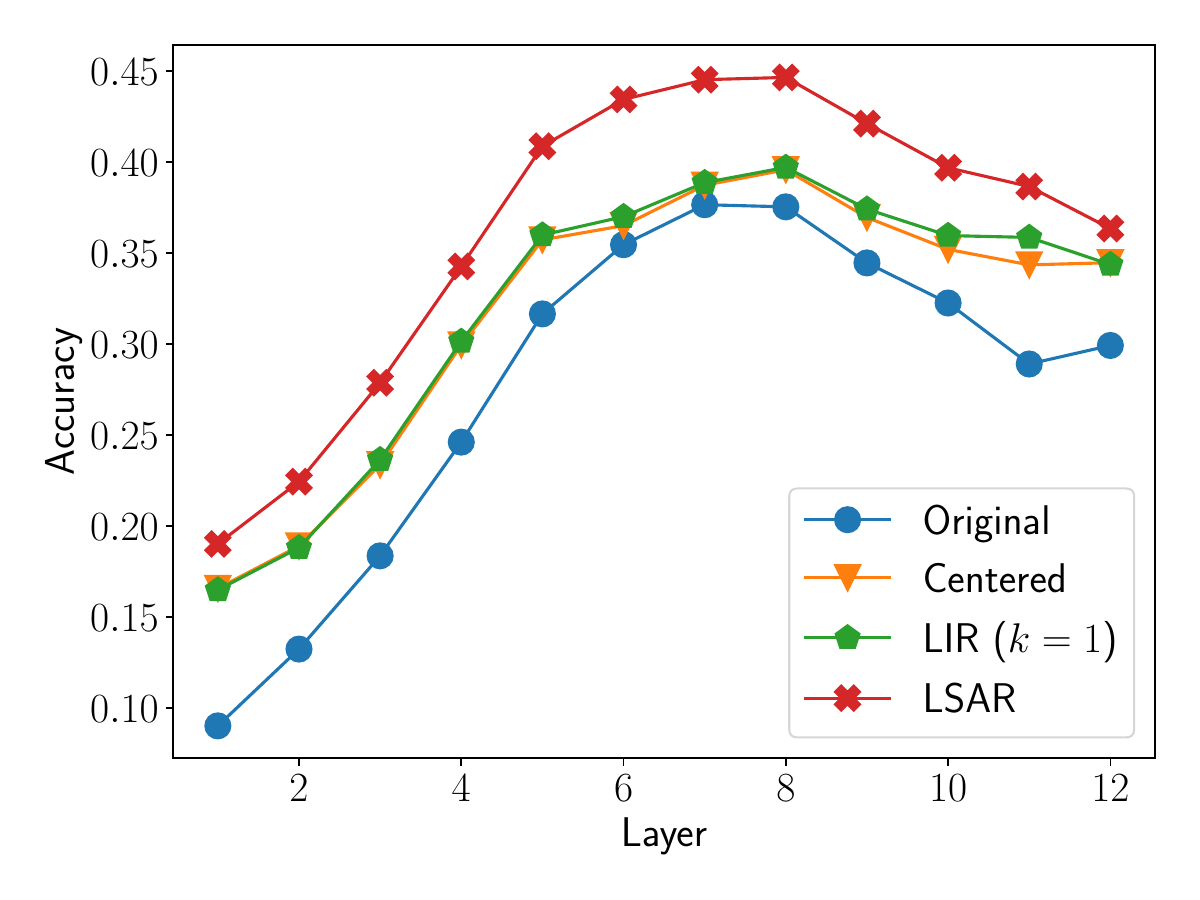}
         \caption{mBERT}
     \end{subfigure}
     \begin{subfigure}[b]{0.32\linewidth}
         \centering
         \includegraphics[width=\linewidth]{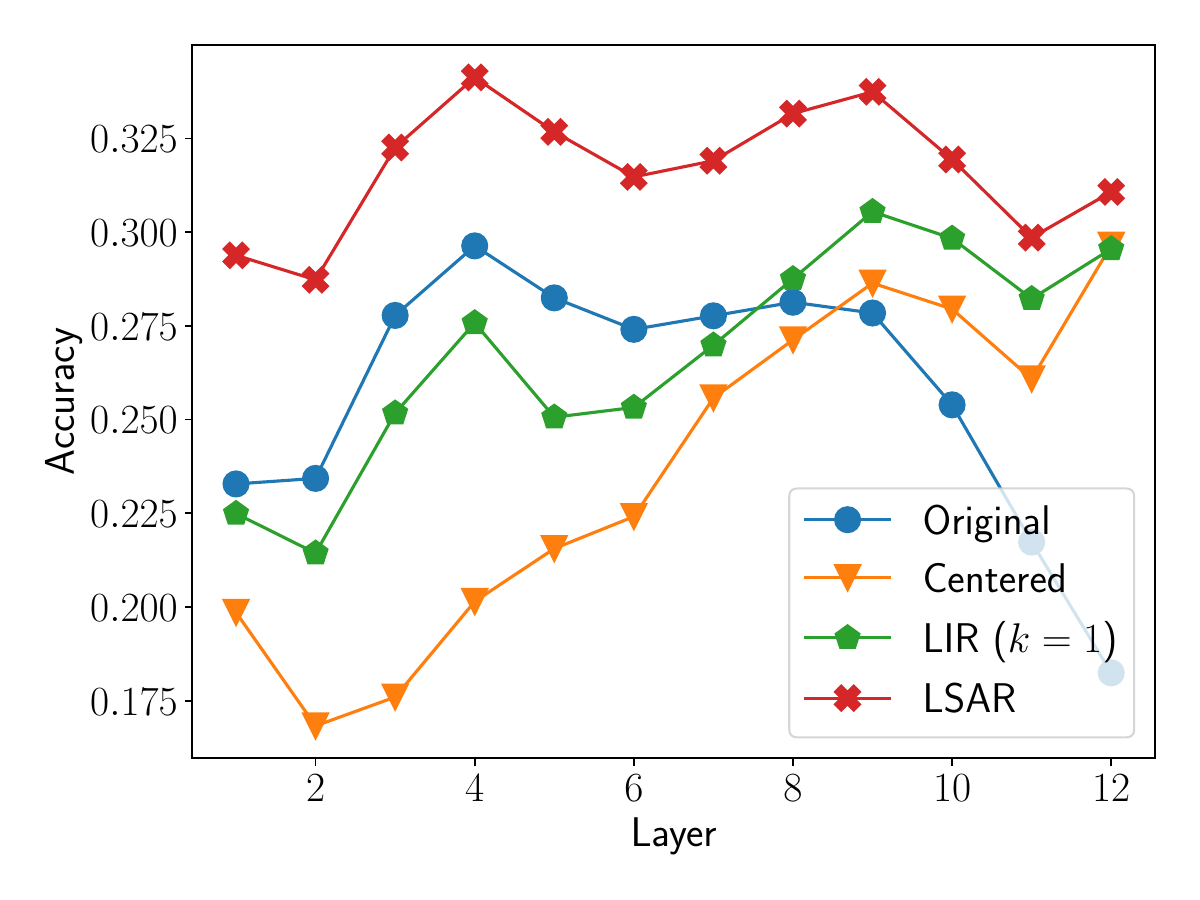}
         \caption{XLM}
     \end{subfigure}
      \begin{subfigure}[b]{0.32\linewidth}
         \centering
         \includegraphics[width=\linewidth]{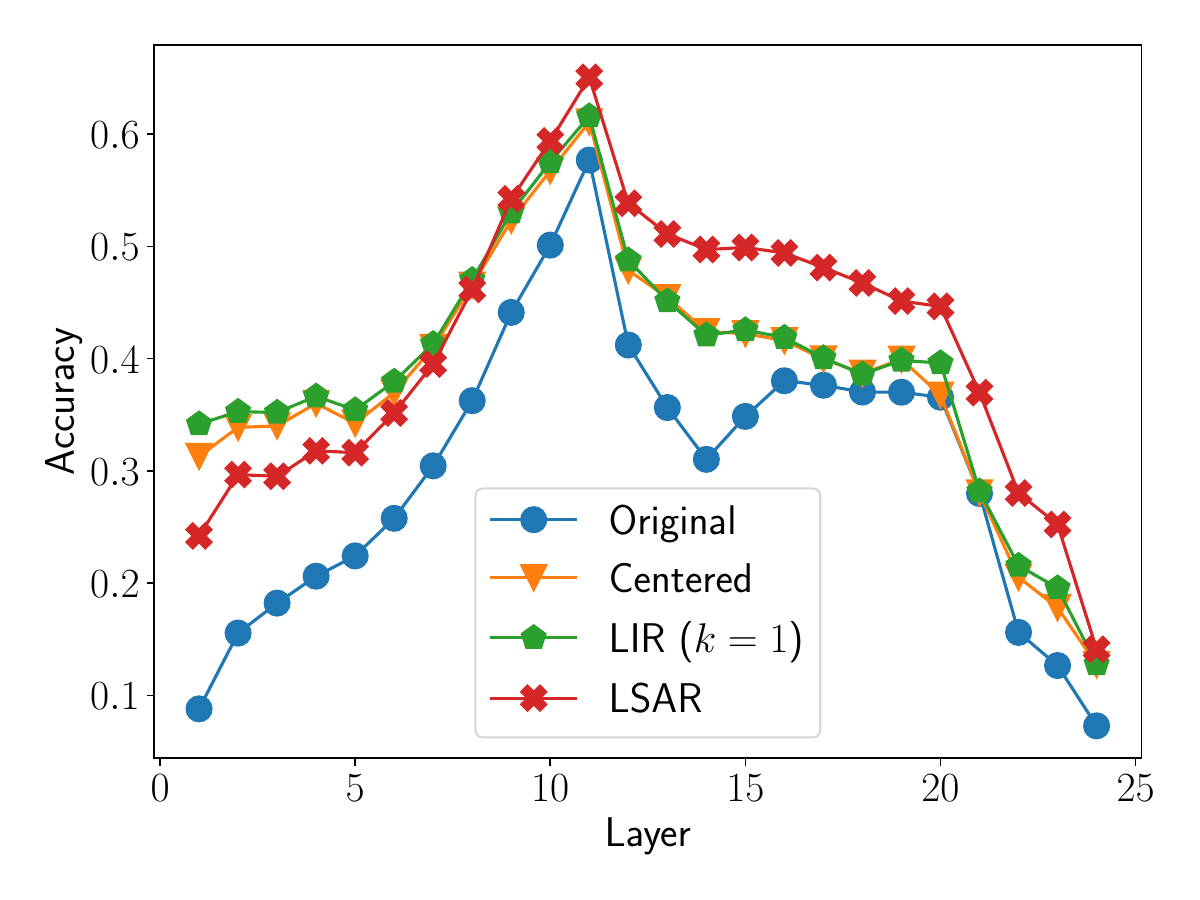}
         \caption{XLM-R}
     \end{subfigure}
    \caption{Retrieval accuracy on Tatoeba (averaged over all 36 languages) at different layers.
    }
    \label{fig:layers}
\end{figure*}

\begin{table}[t]
    \centering
    \begin{tabular}{c|cc|cc}
        \thickhline
        & \multicolumn{2}{c|}{XQuAD-R} & \multicolumn{2}{c}{MLQA-R}\\
        & En-En & X-X & En-En & X-X \\
        \hline
        \original & 28.57 & 23.36 & 35.71 & 26.21\\
        \demean & 35.38 & 45.47 & 35.87 & 43.27 \\
        {\lir} ($k=1$) & 36.71 & 45.24 & 37.56 & 43.24\\
        {\lir} ($k=2$) & 36.70 & 44.74 & 37.11 & 42.42\\
        {\lir} ($k=3$) & 36.82 & 44.54 & 36.87 & 42.28\\
        {\ours} ($r=1$) & 30.51 & 26.38 & 36.79 & 28.79\\
        {\ours} ($r=2$) & 32.31 & 29.22 & 38.05 & 31.70\\
        {\ours} ($r=3$) & 34.05 & 31.99 & 39.00 & 35.28\\
        {\ours} & \textbf{40.95} & \textbf{46.39} & \textbf{40.70} & \textbf{44.02}\\
        \thickhline
    \end{tabular}
    \caption{Answer retrieval mAP (\%) on XQuAD-R and MLQA-R of LAReQA (averaged over all languages), using Wiki-40B as the text resource.
    }
    \label{tab:lareqa_wiki}
\end{table}
\begin{table}[t]
    \centering
    \begin{tabular}{c|cc|cc}
        \thickhline
        & \multicolumn{2}{c|}{XQuAD-R} & \multicolumn{2}{c}{MLQA-R}\\
        & En-En & X-X & En-En & X-X \\
        \hline
        \original & 28.57 & 23.36 & 35.71 & 26.21\\
        \demean & 35.37 & 44.66 & 35.36 & 42.14 \\
        {\lir} ($k=1$) & 37.70 & 44.25 & 38.03 & 41.96\\
        {\lir} ($k=2$) & 36.83 & 43.58 & 37.60 & 41.63\\
        {\lir} ($k=3$) & 36.21 & 43.15 & 36.89 & 41.03\\
        {\ours} ($r=1$) & 30.50 & 26.27 & 36.68 & 28.59\\
        {\ours} ($r=2$) & 32.36 & 28.69 & 37.94 & 31.15\\
        {\ours} ($r=3$) & 34.20 & 31.49 & 38.82 & 34.46\\
        {\ours} & \textbf{41.13} & \textbf{45.89} & \textbf{40.55} & \textbf{43.32}\\
        \thickhline
    \end{tabular}
    \caption{Answer retrieval mAP (\%) on XQuAD-R and MLQA-R of LAReQA (averaged over all languages), using OSCAR as the text resource.
    }
    \label{tab:lareqa_appendix}
\end{table}
\begin{table*}[ht]
    \centering
    \begin{tabular}{c|ccccc|ccccc}
        \thickhline
        & \multicolumn{5}{c|}{Layer 8} & \multicolumn{5}{c}{Layer 12} \\
         & en & de & fr & jp & avg. & em & de & fr & jp & avg. \\
        \hline
        {\original} & 81.13 & 72.82 & 76.02 & 68.98 & 74.74 & 80.07 & 70.05 & 73.75 & 64.86 & 72.18 \\
        {\lir} ($k=1$) & 81.12 & 72.33 & 76.80 & 72.25 & 75.62 & 80.03 & 70.00 & 71.73 & 67.51 & 72.32 \\
        {\lir} ($k=2$) & 81.05 & 71.90 & 76.80 & 72.35 & 75.52 & 79.98 & 71.15 & 72.50 & 69.04 & 73.17 \\
        {\lir} ($k=3$) & 81.10 & 72.23 & 76.22 & 71.06 & 75.15 & 80.03 & 70.85 & 73.67 & 69.36 & 73.48 \\
        {\ours} ($r=1$) & 81.12 & 72.77 & 75.87 & 72.30 & 75.51 & 79.98 & 71.17 & 73.68 & 71.15 & 73.99 \\
        {\ours} ($r=2$) & 81.13 & 72.50 & 76.85 & 72.33 & 75.70 & 80.08 & 71.23 & 73.45 & 70.91 & 73.92 \\
        {\ours} & 81.12 & 72.43 & 76.67 & 72.36 & 75.64 & 79.87 & 70.10 & 71.95 & 68.69 & 72.65 \\
        \thickhline
    \end{tabular}

    \caption{Classification accuracy (\%) on Amazon Reviews (mBERT), using Wiki-40B as the text resource.
    }
    \label{tab:amazon_wiki_mbert}
\end{table*}
\begin{table*}[ht]
    \centering
    \begin{tabular}{c|ccccc|ccccc}
        \thickhline
        & \multicolumn{5}{c|}{Layer 8} & \multicolumn{5}{c}{Layer 12} \\
         & en & de & fr & jp & avg. & em & de & fr & jp & avg. \\
        \hline
        {\original} & 85.45&	69.07&	81.50&	65.21&	75.31& 84.43&	55.42&	72.87&	58.23&	67.74 \\
        {\lir} ($k=1$) & 85.52&	73.68&	81.52&	65.66&	76.59 & 84.50&	75.77&	79.88&	60.98&	75.28 \\
        {\lir} ($k=2$) & 85.52&	73.32&	81.45&	64.31&	76.15 & 84.65&	75.58&	79.73&	60.79&	75.19 \\
        {\lir} ($k=3$) & 85.60&	72.10&	81.62&	62.46&	75.44 & 84.52&	75.52&	79.40&	63.03&	75.62 \\
        {\ours} ($r=1$) & 85.53&	70.98&	81.52&	66.44&	76.12 & 84.45&	56.75&	75.20&	66.64&	70.76 \\
        {\ours} ($r=2$) & 85.48&	73.77&	81.65&	65.43&	76.58 & 84.48&	60.35&	71.25&	66.54&	70.66 \\
        {\ours} & 85.50&	73.78&	81.63&	65.41&	76.58 & 84.48&	75.78&	79.57&	64.99&	76.21 \\
        \thickhline
    \end{tabular}
    \caption{Classification accuracy (\%) on Amazon Reviews (XLM), using Wiki-40B as the text resource.
    }
    \label{tab:amazon_wiki_xlm}
\end{table*}
\begin{table*}[ht]
    \centering
    \begin{tabular}{c|ccccc|ccccc}
        \thickhline
        & \multicolumn{5}{c|}{Layer 11} & \multicolumn{5}{c}{Layer 24} \\
         & en & de & fr & jp & avg. & em & de & fr & jp & avg. \\
        \hline
        {\original} & 84.33&	78.32&	82.30&	76.35&	80.32& 90.55&	78.08&	83.57&	67.14&	79.84 \\
        {\lir} ($k=1$) & 84.33&	82.47&	81.68&	80.18&	82.17 & 90.53&	88.67&	89.88&	86.16&	88.81 \\
        {\lir} ($k=2$) & 84.45&	82.18&	82.10&	80.08&	82.20 & 90.62&	88.48&	88.27&	85.61&	88.25 \\
        {\lir} ($k=3$) & 84.33&	81.40&	83.08&	78.48&	81.82 & 90.67&	88.55&	88.40&	85.61&	88.31 \\
        {\ours} ($r=1$) & 84.35&	77.95&	81.93&	79.78&	81.00 & 90.62&	69.20&	90.00&	83.98&	83.45 \\
        {\ours} ($r=2$) & 84.33&	82.52&	81.17&	80.53&	82.14 & 90.60&	88.73&	79.18&	79.30&	84.45 \\
        {\ours} & 84.30&	82.67&	81.80&	80.56&	82.33 & 90.58&	88.42&	89.33&	85.95&	88.57 \\
        \thickhline
    \end{tabular}
    \caption{Classification accuracy (\%) on Amazon Reviews (XLM-R), using Wiki-40B as the text resource.
    }
    \label{tab:amazon_wiki_xlmr}
\end{table*}
\begin{table*}[ht]
    \centering
        \begin{tabular}{c|ccccc}
        \thickhline
         & en & de & fr & jp & avg.\\
        \hline
        \original & 83.32&
81.37&
84.27&
79.26&
82.05\\
        {\lir} ($k=1$) & 83.18&
81.70&
84.32&
79.51&
82.18 \\
        {\lir} ($k=2$) & 83.20&
81.92&
84.18&
79.33&
82.16 \\
        {\lir} ($k=3$) & 83.18&
81.83&
84.32&
79.45&
82.19 \\
        {\ours} ($r=1$) & 83.32&
81.30&
84.28&
79.21&
82.03 \\
        {\ours} ($r=2$) & 83.10&
81.63&
83.90&
79.61&
82.06 \\
        {\ours} & 83.27&
81.77&
83.95&
79.75&
82.18 \\
        \thickhline
    \end{tabular}
    \caption{Classification accuracy (\%) on Amazon Reviews (LABSE), using Wiki-40B as the text resource.
    }
    \label{tab:amazon_wiki_labse}
\end{table*}

\subsection{Hyperparameter Selection}\label{sec:hyperparameter}
For the considered baselines,
we do not conduct sophisticated hyperparameter search given that it is non-trivial for {\lir}.
To provide fair comparison, for {\lir} and {\ours} that both have one single hyperparameter (the number of top principal components $k$ and the number of basis vectors to span the low-rank subspace $r$), we exhaustively enumerate all values within a scope and report the best performances on the test data.
Figure~\ref{fig:ranks} shows the trend of accuracy on Tatoeba as the hyparameters change.
\begin{figure*}[t]
    \centering
    \begin{subfigure}[b]{0.32\linewidth}
         \centering
         \includegraphics[width=\linewidth]{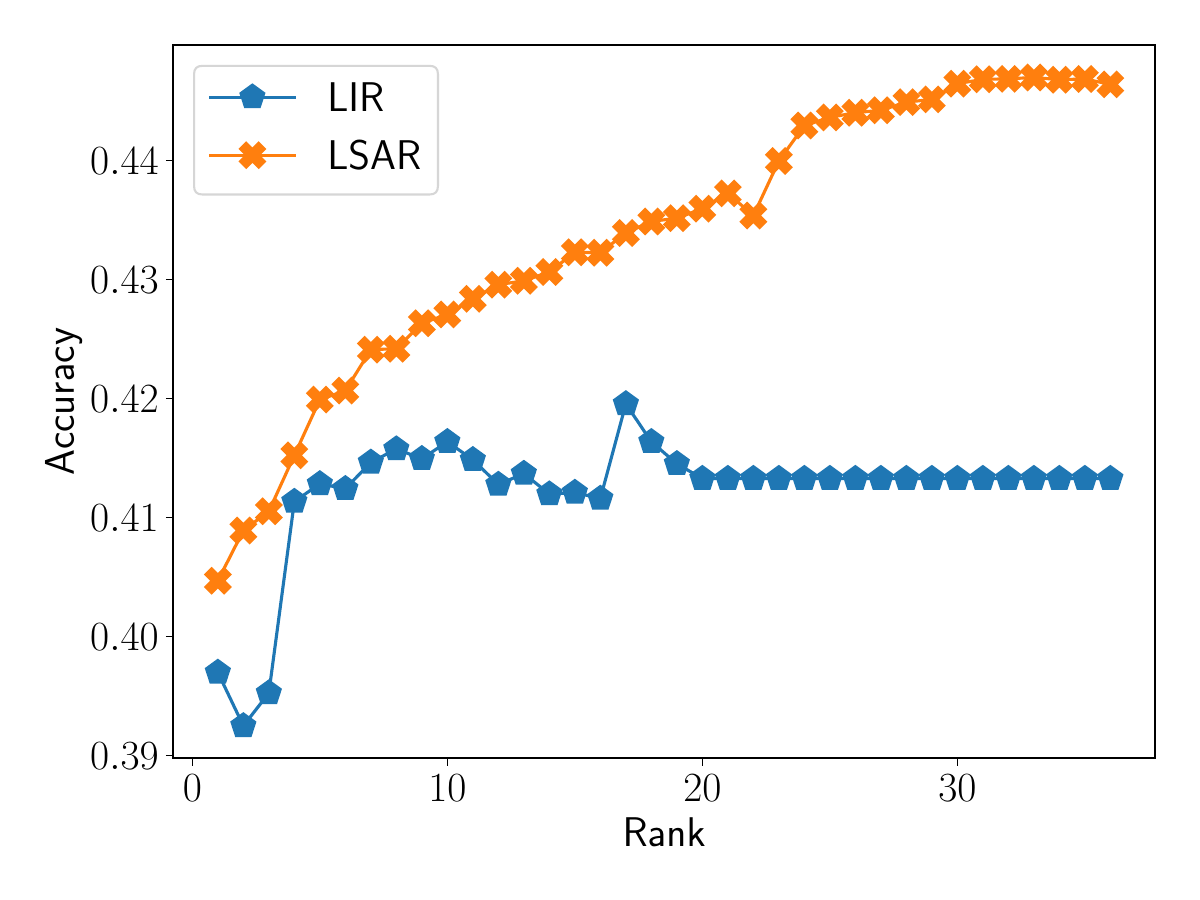}
         \caption{mBERT}
     \end{subfigure}
     \begin{subfigure}[b]{0.32\linewidth}
         \centering
         \includegraphics[width=\linewidth]{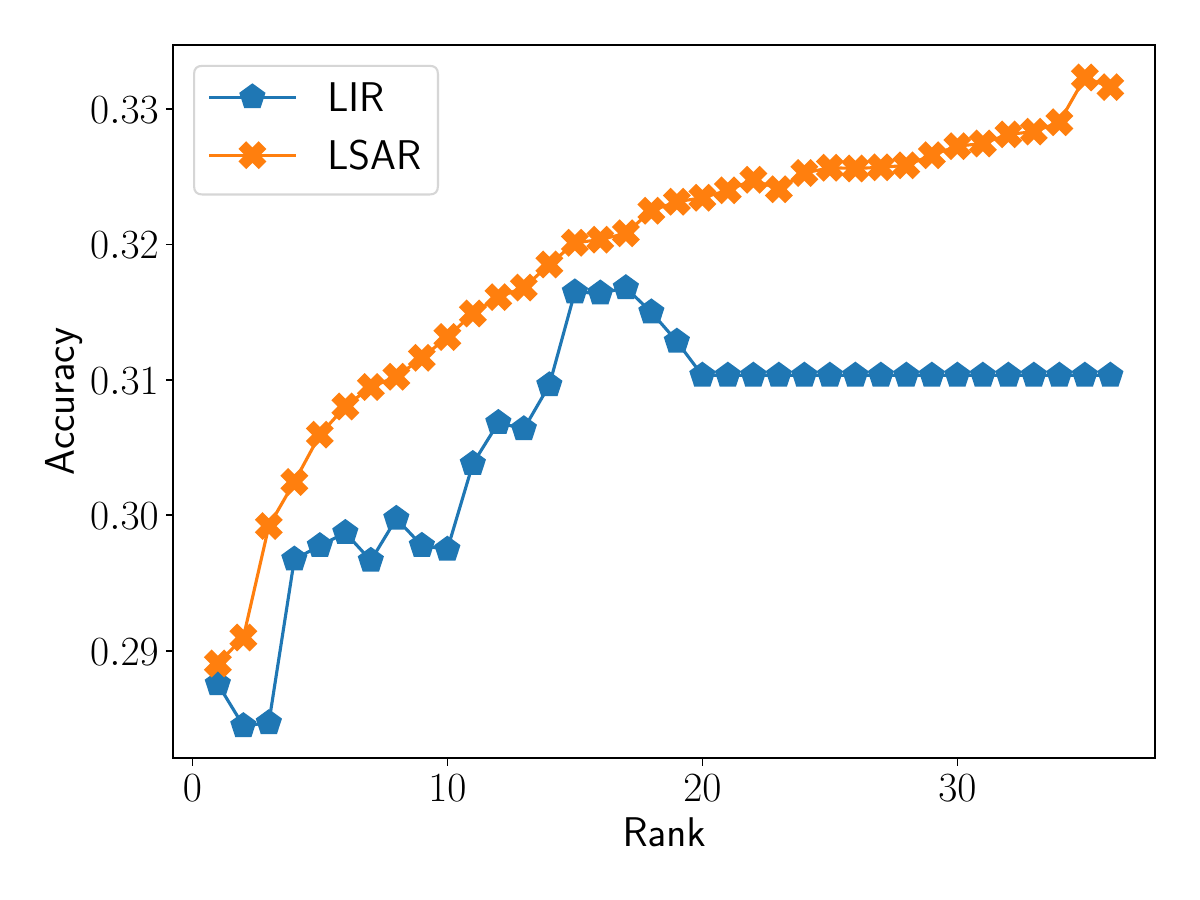}
         \caption{XLM}
     \end{subfigure}
      \begin{subfigure}[b]{0.32\linewidth}
         \centering
         \includegraphics[width=\linewidth]{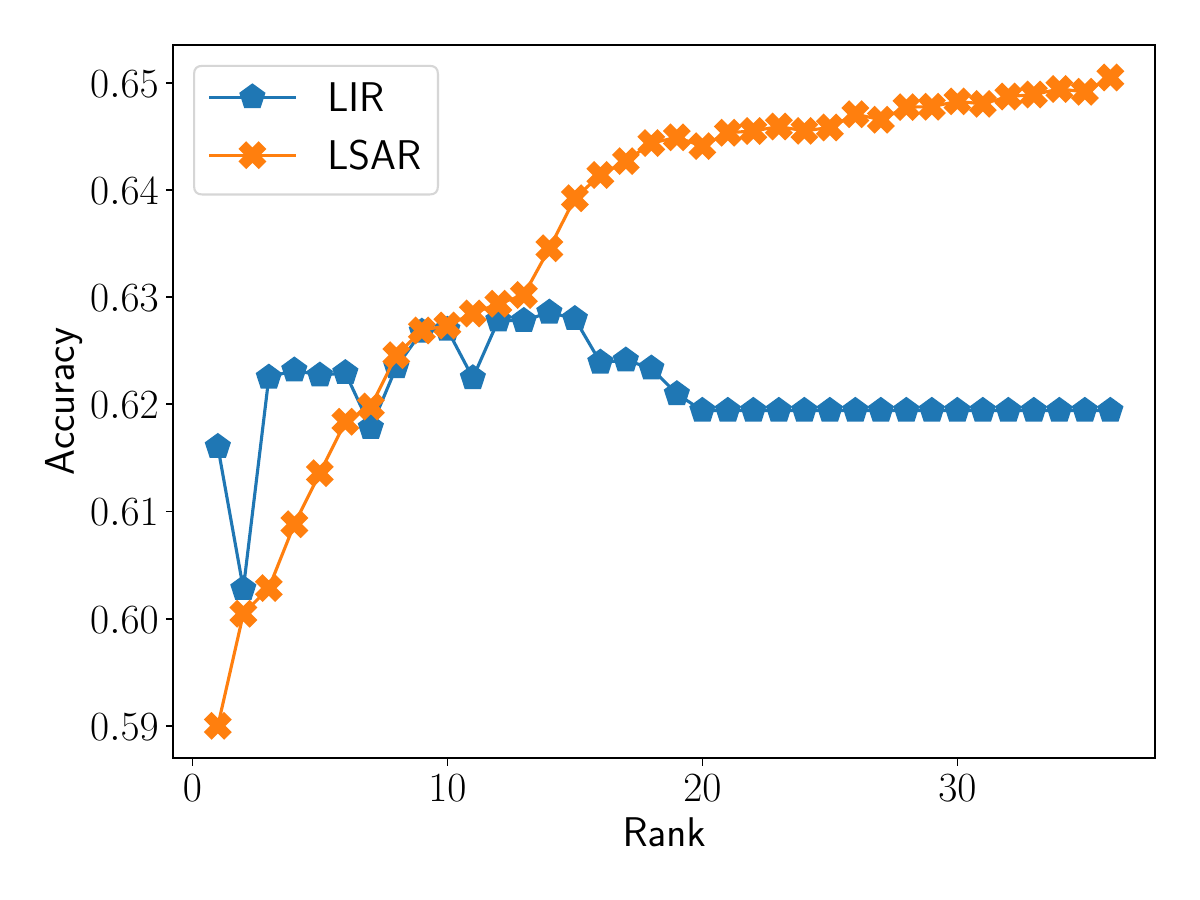}
         \caption{XLM-R}
     \end{subfigure}
    \caption{Retrieval accuracy on Tatoeba (averaged over all 36 languages) with different hyperparameters ($k$ for {\lir} and $r$ for {\ours}).
    We observe that removing more principal components within each language for {\lir} does not result in better performances and can instead lead to information loss.
    For mBERT and XLM, the best $k$ is found 17, whereas it is 14 for XLM-R.
    {\ours}, however, consistently achieves the best results with $r = 36$ as larger subspaces encode more language-specific signals.
    }
    \label{fig:ranks}
\end{figure*}

\subsection{Wiki40-B Results}\label{sec:wiki40b}
In this section we list the results of LAReQA (Table~\ref{tab:lareqa_wiki}) and Amazon Reviews (Table~\ref{tab:amazon_wiki_mbert}-\ref{tab:amazon_wiki_labse}) with Wiki-40B~\citep{guo-etal-2020-wiki}\footnote{\url{https://www.tensorflow.org/datasets/catalog/wiki40b}.} as the text resource.
For Amazon Reviews, we also report the performances obtained in the last layers to reproduce those in \citet{yang-etal-2021-simple}.

For Amazon Reviews, we determine the L2 regularization
strength using a hyperparameter sweep on the 5-fold cross-validation routine, over the range between 1e-4 and 1e4 with 10 logarithmically spaced steps.
This training procedure is implemented using the Scikit-Learn library~\citep{sklearn_api}.

\subsection{OSCAR Results}\label{sec:detail_results}
The detailed results with OSCAR is provided in this section.

\paragraph{Tatoeba}
We report the results for all languages on Tatoeba in Table~\ref{tab:tatoeba_mbert_appendix}-\ref{tab:tatoeba_labse_appendix}.
Additionally, the complete set of results for clustering performance is shown in Table~\ref{tab:cluster_appendix}.

\paragraph{LAReQA} We report the detailed results on LAReQA in Table~\ref{tab:lareqa_appendix}.
We omit listing all languages due to limited space.

\paragraph{Amazon Reviews} We provide the results for all languages on Amazon Reviews in Table~\ref{tab:amazon_oscar_mbert}-\ref{tab:amazon_oscar_labse}.

\begin{table*}[b]
    \centering
    \begin{tabular}{c|cccc}
        \thickhline
        & mBERT & XLM & XLM-R & LABSE \\
        \hline
        {\original} & 0.2815 & 0.5422 & 0.2457 & 0.0344 \\
        {\demean} & 0.0975 & 0.2483 & 0.2004 & 0.0388\\
        {\lir} ($k=1$) & 0.0900 & 0.1875 & 0.2203 & 0.0352 \\
        {\ours} & 0.0801 & 0.1320 & 0.0856 & 0.0306 \\
        \thickhline
    \end{tabular}
    \caption{
    Clustering performance (NMI) of embeddings obtained by mBERT on Tatoeba.
    }
    \label{tab:cluster_appendix}
\end{table*}

\begin{table*}[ht]
    \centering
\begin{tabular}{c|ccccc|ccccc}
        \thickhline
        & \multicolumn{5}{c|}{Layer 8} & \multicolumn{5}{c}{Layer 12} \\
         & en & de & fr & jp & avg. & em & de & fr & jp & avg. \\
        \hline
        {\original} & 81.13 & 72.82 & 76.02 & 68.98 & 74.74 & 80.07 & 70.05 & 73.75 & 64.86 & 72.18 \\
        {\lir} ($k=1$) & 81.12&	72.90&	75.08&	72.43&	75.38 & 80.07&	71.08&	71.40&	67.11&	72.42 \\
        {\lir} ($k=2$) & 81.03&	71.47&	70.58&	66.09&	72.29 & 79.97&	69.35&	72.07&	66.29&	71.92 \\
        {\lir} ($k=3$) & 80.85&	68.67&	74.38&	67.53&	72.86 & 79.88&	66.10&	69.80&	66.59&	70.59 \\
        {\ours} ($r=1$) & 81.25&	72.78&	75.80&	72.48&	75.58 & 79.98&	71.03&	73.62&	70.45&	73.77 \\
        {\ours} ($r=2$) & 81.27&	72.57&	75.85&	72.30&	75.49 & 80.07&	71.12&	73.48&	70.11&	73.69 \\
        {\ours} & 81.15&	72.90&	75.22&	71.68&	75.24 & 79.88&	70.80&	71.70&	67.79&	72.54 \\
        \thickhline
    \end{tabular}

    \caption{Classification accuracy (\%) on Amazon Reviews (mBERT), using OSCAR as the text resource.
    }
    \label{tab:amazon_oscar_mbert}
\end{table*}
\begin{table*}[ht]
    \centering
    \begin{tabular}{c|ccccc|ccccc}
        \thickhline
        & \multicolumn{5}{c|}{Layer 8} & \multicolumn{5}{c}{Layer 12} \\
         & en & de & fr & jp & avg. & em & de & fr & jp & avg. \\
        \hline
        {\original} & 85.45&	69.07&	81.50&	65.21&	75.31& 84.43&	55.42&	72.87&	58.23&	67.74 \\
        {\lir} ($k=1$) & 85.58&
77.57&
80.05&
59.74&
75.74 & 84.52&
75.75&
80.20&
55.26&
73.93 \\
        {\lir} ($k=2$) & 85.40&
76.72&
79.82&
60.86&
75.70 & 84.48&
75.57&
77.95&
55.46&
73.36 \\
        {\lir} ($k=3$) & 85.15&
77.42&
81.07&
51.51&
73.79 & 84.48&
74.55&
76.13&
51.26&
71.61 \\
        {\ours} ($r=1$) & 85.47&
69.08&
81.42&
63.78&
74.94 & 84.50&
56.33&
74.63&
66.84&
70.58 \\
        {\ours} ($r=2$) & 85.37&
74.53&
81.60&
61.88&
75.84 & 84.50&
57.75&
72.80&
66.86&
70.48 \\
        {\ours} & 85.45&
77.15&
80.25&
58.24&
75.27 & 84.62&
75.87&
80.65&
57.14&
74.57 \\
        \thickhline
    \end{tabular}
    \caption{Classification accuracy (\%) on Amazon Reviews (XLM), using OSCAR as the text resource.
    }
    \label{tab:amazon_oscar_xlm}
\end{table*}
\begin{table*}[ht]
    \centering
    \begin{tabular}{c|ccccc|ccccc}
        \thickhline
        & \multicolumn{5}{c|}{Layer 11} & \multicolumn{5}{c}{Layer 24} \\
         & en & de & fr & jp & avg. & em & de & fr & jp & avg. \\
        \hline
        {\original} & 84.33&
78.32&
82.30&
76.35&
80.32&
90.55&
78.08&
83.57&
67.14&
79.84
 \\
        {\lir} ($k=1$) & 84.32&
82.55&
77.82&
79.93&
81.15 & 90.53&
88.85&
87.67&
86.11&
88.29 \\
        {\lir} ($k=2$) & 84.42&
82.27&
78.15&
79.45&
81.07 & 90.63&
89.12&
85.93&
85.86&
87.89 \\
        {\lir} ($k=3$) & 84.33&
81.05&
77.57&
79.16&
80.53 & 90.68&
89.85&
84.68&
86.30&
87.88 \\
        {\ours} ($r=1$) & 84.32&
78.80&
82.12&
80.66&
81.47 & 90.55&
83.47&
77.67&
80.86&
83.14 \\
        {\ours} ($r=2$) & 84.32&
82.55&
82.08&
80.53&
82.37 & 90.55&
87.63&
76.57&
77.66&
83.10 \\
        {\ours} & 84.27&
82.60&
77.85&
80.28&
81.25 & 90.57&
89.37&
88.03&
86.01&
88.50 \\
        \thickhline
    \end{tabular}
    \caption{Classification accuracy (\%) on Amazon Reviews (XLM-R), using OSCAR as the text resource.
    }
    \label{tab:amazon_oscar_xlmr}
\end{table*}
\begin{table*}[ht]
    \centering
        \begin{tabular}{c|ccccc}
        \thickhline
         & en & de & fr & jp & avg.\\
        \hline
        \original & 83.32&
81.37&
84.27&
79.26&
82.05\\
        {\lir} ($k=1$) & 83.40&
81.85&
82.62&
79.81&
81.92 \\
        {\lir} ($k=2$) & 83.28&
80.92&
78.37&
78.73&
80.32 \\
        {\lir} ($k=3$) & 82.88&
78.92&
78.82&
78.85&
79.87\\
        {\ours} ($r=1$) & 83.07&
81.52&
83.88&
79.20&
81.92 \\
        {\ours} ($r=2$) & 83.02&
82.10&
83.55&
79.66&
82.08 \\
        {\ours} & 83.13&
81.92&
83.18&
79.48&
81.93 \\
        \thickhline
    \end{tabular}
    \caption{Classification accuracy (\%) on Amazon Reviews (LABSE), using OSCAR as the text resource.
    }
    \label{tab:amazon_oscar_labse}
\end{table*}

\begin{table*}[t]
    \centering
    \small
    \begin{tabular}{c|cccccccccccc}
        \thickhline
        & af & ar & bg & bn & de & el & es & et & eu & fa & fi & fr  \\
        {\original} & 38.90&	24.50&	48.80&	17.00&	75.40&	29.80&	64.10&	28.10&	25.50&	41.20&	39.00&	64.30\\
        \demean &
        40.90&	27.30&	48.50&	17.30&	74.70&	35.10&	66.40&	29.60&	27.40&	43.70&	40.30&	65.30\\
        {\lir} ($k=1$) & 
        41.00&	27.20&	48.60&	17.90&	74.90&	35.10&	66.40&	30.10&	27.70&	44.00&	40.50&	64.90\\
        \ours & 
        44.70&	31.80&	55.00&	21.90&	79.00&	38.70&	71.20&	35.30&	32.00&	49.80&	46.40&	69.10\\
        \hline
        & he & hi & hu & id & it & ja & jv & ka & kk & ko & ml & mr \\
        {\original} & 40.10&	34.80&	36.90&	53.50&	57.30&	40.90&	17.56&	19.57&	27.13&	36.00&	17.90&	20.10\\
        \demean & 
        41.50&	35.40&	41.40&	53.40&	58.30&	41.60&	18.54&	23.32&	30.96&	38.70&	27.66&	23.00\\
        {\lir} ($k=1$) & 
        41.70&	35.40&	41.60&	53.70&	58.20&	41.90&	18.05&	23.73&	30.96&	38.80&	28.82&	23.00\\
        \ours & 45.70&	43.90&	46.00&	60.00&	61.90&	51.00&	24.88&	28.28&	34.09&	45.30&	36.83&	26.40\\
        \hline
        & nl & pt & ru & sw & ta & te & th & tl & tr & ur & vi & zh \\
        {\original} & 63.70&	68.40&	59.40&	10.77&	13.36&	14.10&	13.69&	16.00&	32.90&	30.80&	61.00&	68.60\\
        \demean & 
        64.30&	69.50&	62.40&	12.56&	14.33&	14.96&	17.15&	18.10&	38.20&	31.40&	62.20&	69.00\\
        {\lir} ($k=1$) & 
        65.10&	69.30&	62.10&	12.31&	14.33&	14.96&	17.15&	18.20&	38.20&	32.10&	62.00&	69.20\\
        \ours & 
        69.20&	73.10&	67.20&	14.36&	18.57&	21.37&	21.72&	22.00&	41.90&	38.00&	67.10&	73.30\\
        \hline
        \thickhline
    \end{tabular}
    \caption{Retrieval accuracy (\%) on Tatoeba for each language (mBERT), using OSCAR as the text resource.
    }
    \label{tab:tatoeba_mbert_appendix}
\end{table*}

\begin{table*}[t]
    \centering
    \small
    \begin{tabular}{c|cccccccccccc}
        \thickhline
        & af & ar & bg & bn & de & el & es & et & eu & fa & fi & fr  \\
        {\original} & 34.20 &
17.80 &
34.80 &
5.70 &
62.20 &
24.90 &
56.00 &
18.40 &
11.90 &
30.50 &
28.10 &
52.80
\\
        \demean & 30.30 &
17.30 &
35.30 &
5.00 &
62.20 &
22.50 &
53.50 &
19.20 &
14.70 &
29.90 &
31.30 &
49.20
\\
        {\lir} ($k=1$) & 32.20 &
18.20 &
37.30 &
5.80 &
65.10 &
25.60 &
54.10 &
21.10 &
16.60 &
31.00 &
32.00 &
51.70
\\
        \ours & 37.50 &
20.10 &
42.40 &
9.90 &
68.20 &
30.50 &
58.80 &
25.50 &
22.00 &
35.00 &
36.10 &
55.10
\\
        \hline
        & he & hi & hu & id & it & ja & jv & ka & kk & ko & ml & mr \\
        {\original} & 31.20 &
15.70 &
29.50 &
44.60 &
52.20 &
32.20 &
19.51 &
22.12 &
14.26 &
25.20 &
0.58 &
6.30\\
        \demean & 30.00 &
14.50 &
30.00 &
45.10 &
49.90 &
28.60 &
17.56 &
19.71 &
14.78 &
22.70 &
0.44 &
5.50\\
        {\lir} ($k=1$) & 31.40 &
17.40 &
31.20 &
45.40 &
50.60 &
31.90 &
19.02 &
21.85 &
16.70 &
24.50 &
0.87 &
6.20\\
        \ours & 34.10 &
24.30 &
36.70 &
49.20 &
55.10 &
36.80 &
22.44 &
24.80 &
20.87 &
29.30 &
4.95 &
10.70\\
        \hline
        & nl & pt & ru & sw & ta & te & th & tl & tr & ur & vi & zh \\
        {\original} & 55.00 &
58.40 &
44.20 &
8.97 &
1.63 &
5.56 &
27.74 &
12.40 &
24.90 &
17.80 &
45.70 &
39.70\\
        \demean & 55.60 &
58.10 &
42.50 &
6.92 &
2.28 &
5.13 &
18.43 &
14.60 &
27.70 &
16.40 &
43.70 &
36.00\\
        {\lir} ($k=1$) & 57.30 &
58.80 &
43.60 &
9.49 &
2.28 &
5.56 &
23.91 &
15.20 &
28.80 &
17.20 &
45.20 &
40.10\\
        \ours & 59.70 &
61.90 &
47.60 &
11.79 &
6.84 &
11.54 &
32.66 &
20.10 &
33.50 &
22.90 &
52.00 &
42.90\\
        \hline
        \thickhline
    \end{tabular}
    \caption{Retrieval accuracy (\%) on Tatoeba for each language (XLM), using OSCAR as the text resource.
    }
    \label{tab:tatoeba_xlm_appendix}
\end{table*}

\begin{table*}[t]
    \centering
    \small
    \begin{tabular}{c|cccccccccccc}
        \thickhline
        & af & ar & bg & bn & de & el & es & et & eu & fa & fi & fr  \\
        {\original} & 58.20 &
47.50 &
71.60 &
43.00 &
88.80 &
61.80 &
75.70 &
52.20 &
35.80 &
70.50 &
71.60 &
73.70\\
        \demean & 59.30 &
49.60 &
75.00 &
45.30 &
90.90 &
65.80 &
76.60 &
57.10 &
45.80 &
72.10 &
78.40 &
73.00\\
        {\lir} ($k=1$) & 59.80 &
50.30 &
75.30 &
46.10 &
90.70 &
66.30 &
77.20 &
57.50 &
47.00 &
72.60 &
78.80 &
73.80\\
        \ours & 65.20 &
55.00 &
76.50 &
52.60 &
91.60 &
71.30 &
80.90 &
60.90 &
52.00 &
75.90 &
78.90 &
77.50\\
        \hline
        & he & hi & hu & id & it & ja & jv & ka & kk & ko & ml & mr \\
        {\original} & 66.40 &
72.20 &
65.40 &
77.00 &
68.30 &
60.60 &
14.15 &
52.28 &
48.52 &
61.40 &
65.36 &
56.80\\
        \demean & 69.10 &
74.10 &
67.90 &
80.00 &
70.60 &
62.50 &
21.95 &
62.60 &
49.57 &
63.00 &
70.01 &
60.30\\
        {\lir} ($k=1$) & 69.50 &
75.00 &
68.20 &
80.40 &
71.50 &
62.60 &
20.98 &
63.00 &
50.78 &
63.50 &
69.87 &
61.20\\
        \ours & 71.80 &
79.40 &
72.70 &
81.50 &
73.70 &
68.20 &
26.34 &
61.53 &
55.65 &
69.70 &
76.71 &
67.60\\
        \hline
        & nl & pt & ru & sw & ta & te & th & tl & tr & ur & vi & zh \\
        {\original} & 80.80 &
82.20 &
74.10 &
20.26 &
26.38 &
35.90 &
29.38 &
36.70 &
65.70 &
23.40 &
74.70 &
68.30\\
        \demean & 81.80 &
81.50 &
78.20 &
24.10 &
30.62 &
41.45 &
30.29 &
37.30 &
74.00 &
26.90 &
79.70 &
72.60\\
        {\lir} ($k=1$) & 82.10 &
82.20 &
78.80 &
25.64 &
31.60 &
41.88 &
31.02 &
37.60 &
74.50 &
27.00 &
80.40 &
73.10\\
        \ours & 84.10 &
84.30 &
79.00 &
26.92 &
36.16 &
44.02 &
35.04 &
47.00 &
75.50 &
32.90 &
79.90 &
73.80\\
        \hline
        \thickhline
    \end{tabular}
    \caption{Retrieval accuracy (\%) on Tatoeba for each language (XLM-R), using OSCAR as the text resource.
    }
    \label{tab:tatoeba_xlmr_appendix}
\end{table*}
\begin{table*}[t]
    \centering
    \small
    \begin{tabular}{c|cccccccccccc}
        \thickhline
        & af & ar & bg & bn & de & el & es & et & eu & fa & fi & fr  \\
        {\original} & 97.70&
90.60&
95.50&
91.60&
99.30&
96.70&
98.10&
98.00&
95.40&
96.30&
97.00&
96.10\\
        \demean & 97.60&
90.40&
95.60&
91.60&
99.30&
96.60&
98.30&
98.10&
95.70&
96.20&
97.20&
96.30\\
        {\lir} ($k=1$) & 97.70&
90.40&
95.60&
91.60&
99.30&
96.80&
98.10&
98.10&
95.80&
96.10&
97.00&
96.30\\
        \ours & 97.40&
90.90&
95.40&
91.60&
99.30&
96.60&
98.20&
97.90&
95.60&
95.90&
97.10&
96.30\\
        \hline
        & he & hi & hu & id & it & ja & jv & ka & kk & ko & ml & mr \\
        {\original} & 92.40&
97.90&
97.00&
95.60&
95.30&
96.40&
85.37&
95.71&
91.13&
94.10&
98.98&
95.00\\
        \demean & 92.10&
97.90&
97.10&
95.80&
95.20&
96.70&
87.80&
95.58&
91.30&
94.20&
99.13&
95.00\\
        {\lir} ($k=1$) & 92.10&
97.90&
97.00&
95.60&
95.40&
96.50&
87.80&
95.71&
91.83&
94.00&
99.13&
95.20\\
        \ours & 92.40&
97.80&
97.10&
95.80&
95.40&
96.50&
85.85&
95.71&
91.65&
93.90&
99.13&
94.80\\
        \hline
        & nl & pt & ru & sw & ta & te & th & tl & tr & ur & vi & zh \\
        {\original} & 97.50&
95.70&
95.30&
89.49&
90.23&
98.29&
97.08&
98.00&
98.20&
96.00&
97.80&
96.10\\
        \demean & 97.70&
95.60&
95.00&
89.23&
90.23&
98.72&
97.26&
97.90&
98.20&
95.70&
97.90&
96.00\\
        {\lir} ($k=1$) & 97.70&
95.70&
95.20&
90.26&
90.55&
98.72&
97.45&
98.00&
98.30&
95.90&
97.80&
96.00\\
        \ours & 97.50&
96.00&
95.40&
90.26&
90.55&
98.72&
96.90&
97.80&
98.30&
96.00&
97.70&
96.20\\
        \hline
        \thickhline
    \end{tabular}
    \caption{Retrieval accuracy (\%) on Tatoeba for each language (LABSE), using OSCAR as the text resource.
    }
    \label{tab:tatoeba_labse_appendix}
\end{table*}


\end{document}